\documentclass{article}

\PassOptionsToPackage{round}{natbib}

\usepackage[preprint]{neurips_2026} 

\usepackage{mathtools} 
\usepackage{booktabs} 
\usepackage{tikz} 

\usepackage{algorithm}
\usepackage{algpseudocode}
\usepackage{graphicx}
\usepackage{subcaption}
\usepackage{hyperref}
\usepackage{wrapfig}
\usepackage{amsmath,amssymb,mathtools,amsthm,amsfonts}
\usepackage{nicefrac}       
\usepackage{microtype}      
\usepackage{xcolor}         
\usepackage{dsfont,bm}
\hypersetup{colorlinks=true, linkcolor = blue,
     anchorcolor = blue,
     citecolor = blue,
     filecolor = blue,
     urlcolor = blue}
\usepackage{soul}

\DeclareMathOperator*{\argmin}{arg\,min}
\DeclareMathOperator*{\argmax}{arg\,max}

\theoremstyle{plain}
\newtheorem{thmthm}{Theorem}

\newtheorem{thmprop}{Proposition}

\theoremstyle{definition}

\newtheorem*{thmrem*}{Remark}
\newtheorem*{thmprop*}{Proposition}

\newenvironment{proofsketch}{%
  \proof}{\endproof}

\theoremstyle{plain}
\makeatletter
\newtheorem*{rep@theorem}{\rep@title}
\newcommand{\newreptheorem}[2]{%
\newenvironment{rep#1}[1]{%
 \def\rep@title{#2 \ref{##1} (Restated)}%
 \begin{rep@theorem}}%
 {\end{rep@theorem}}}
\makeatother
\theoremstyle{plain}
\newreptheorem{theorem}{Theorem}
\newreptheorem{prop}{Proposition}
\newreptheorem{lemma}{Lemma}
\newreptheorem{cor}{Corollary}
\theoremstyle{definition}
\newreptheorem{def}{Definition}

\def\lobts{\textbf{lobTS}}

\def\lobucb{\textbf{lobUCB}}
\def\mts{\textbf{mTS}}
\def\mucb{\textbf{mUCB}}
\def\ts{\textbf{TS}}
\def\ucb{\textbf{UCB}}
\def\lob{\textbf{lob}}

\def\reg{\mathrm{Reg}}

\def\E{\mathbb{E}}

\def\V{\mathbb{V}}

\def\bmu{\boldsymbol{\mu}}
\def\blambda{\boldsymbol{\lambda}}
\def\hmu{\hat{\mu}}
\def\tmu{\tilde{\mu}}
\def\hbmu{\hat{\boldsymbol{\mu}}}
\def\tbmu{\tilde{\boldsymbol{\mu}}}

\def\bw{{\boldsymbol{w}}}

\def\bbP{\mathbb{P}}

\def\bbR{\mathbb{R}}

\def\cA{\mathcal{A}}

\def\cD{\mathcal{D}}

\def\cG{\mathcal{G}}
\def\cH{\mathcal{H}}
\def\cI{\mathcal{I}}

\def\cL{\mathcal{L}}
\def\cM{\mathcal{M}}
\def\cN{\mathcal{N}}
\def\cO{\mathcal{O}}
\def\cP{\mathcal{P}}

\def\cS{\mathcal{S}}

\newcommand{\KL}{\mathbb{KL}}
\newcommand{\lobTS}{\textbf{lobTS}}

\newtheorem{observation}{Observation}

\title{Latent Order Bandits}

\author{%
  Emil Carlsson \\
  Sleep Cycle AB\\
  \texttt{emil.carlsson@sleepcycle.com} \\
  \And
  Newton Mwai\\
  Chalmers University of Technology\\
  \& University of Gothenburg\\
  \texttt{mwai@chalmers.se} \\
  \And
  Fredrik D. Johansson \\
  Chalmers University of Technology\\
  \& University of Gothenburg\\
  \texttt{fredrik.johansson@chalmers.se} \\
}

\begin{document}

\maketitle

\begin{abstract}
    Bandit algorithms solve diverse sequential decision-making problems, but are often too sample-inefficient for from-scratch personalization. To substantially reduce exploration times, latent bandit algorithms exploit cross-instance structure implied by discrete latent states, provided that the posterior distribution of rewards and latent states is known and accurate. However, obtaining an accurate model of this structure is difficult, and a small number of latent states may be insufficient to characterize the reward distributions in all problem instances. We propose latent order bandits (LOB), relaxing the assumptions of latent bandits to require only prior knowledge of a \emph{partial order} of action preferences in each state. This allows instances of the same state to vary in reward distributions, as long as the partial order of actions is shared. For example, groups of users on a streaming service may agree on which movie genres are the best but rate experiences on different scales. We give an upper-confidence bound procedure for the LOB problem, applicable to both total and partial latent orders, and give an upper bound on its regret. To improve empirical performance, we propose a posterior-sampling algorithm and show, in a suite of experiments, that both are competitive with full-prior latent bandits when same-state instances share reward parameters, and preferable to them when reward scales differ between instances with the same latent state. 
\end{abstract}

%
%
\section{Introduction}
\label{sec:introduction}
Bandit algorithms offer a principled framework for decision-making under uncertainty, with strong theoretical guarantees~\citep{li2010contextual, chapelle2011empirical,Bouneffouf2020, yancey2020sleeping, o2022should}, but are often too inefficient for personalization~\citep{mwai2023fast}. For example, on a movie streaming platform, active users may be exposed to hundreds of recommendations during their subscription period, while others may interact with the system only a handful of times, insufficient to learn their preferences and improve their experience. Treating each user as its own bandit instance may result in poor recommendations and users leaving the service.




A common approach to using bandit algorithms for personalization is to leverage structural similarities between instances (users) to improve exploration times. Contextual bandits impose structure by modeling the expected rewards of actions as a shared function of observed context variables~\citep{lattimore2020bandit}, but these may be insufficient to fully characterize the reward distributions of instances with hidden preferences and require long exploration to learn. Clustering bandits~\citep{gentile2014online,li2019improved} address the latter by learning to group instances into clusters, assuming that the reward model of all instances in a given cluster share parameters. A related line of work, latent bandits~\citep{atan2018global,hong2020latent,maillard2014latent,pmlr-v206-pal23a, balciouglu2024identifiable}, exploit pre-learned reward models and use exploration to identify an instance's latent state, observed noisily through the context and the outcomes of actions. If the number of latent states is small, and the reward distribution can be learned effectively, this can substantially reduce exploration times~\citep{mwai2023fast}. 

A limitation of contextual, clustering and latent bandits, is that they assume that two instances would receive the same average reward for the same action, were they in the same context, cluster or state, respectively. When rewards represent subjective ratings, individuals (instances) tend to have different internal rating scales, unobserved by the learning algorithm. For contextual bandits, this leads to model misspecification, and for clustering/latent bandits, instance-specific scales demand a prohibitively large number of clusters, directly impacting the amount of exploration needed. Finally, most existing latent bandit algorithms require access to a full, accurate latent variable model, but give little or no guidance for how it can be learned~\citep{balciouglu2024identifiable}. Can we match the performance of latent bandits with fewer model constraints and less prior knowledge?


\paragraph{Contributions.} (i) We propose \emph{Latent Order Bandits} (LOB)---a variant of the latent bandit problem where each discrete latent state specifies a partial order of the mean rewards of actions, but \emph{not} a full distribution of rewards. Unlike existing latent bandits, this setting allows for structural similarity between instances (e.g., users) with different absolute reward scales but shared relative preferences. (ii) We apply an existing lower bound on regret to show that knowing the set of possible state orders can reduce the difficulty of the bandit problem (Section~\ref{sec:setup}).
(iii) We introduce a UCB algorithm (\lobucb{}) that exploits knowledge of the set of possible state orders, and give a bound on its regret. As a complement, we propose a heuristic Thompson sampling algorithm (\lobts{}) that exploits potential state orders to sample from an approximate posterior probability that each action is optimal (Section~\ref{sec:method}). (iv) We show empirically that \lobucb{}{} and \lobts{} can perform comparably to latent bandits with fully known reward distributions when both models are well-specified, and favorably to them when same-state instances have individual reward scales. Finally, in experiments on real-world rating data, we observe that both \lob{} algorithms degrade gracefully when misspecified (Section~\ref{sec:experiments}).


%
%
\section{Latent order bandits}
\label{sec:setup}
We study \emph{latent order bandits} (LOB) where an agent interacts with an environment over $T$ rounds. In each round $t \in [T]$, the agent selects an action (arm) $a_t \in \cA = \{1, ..., k\}$ and receives a stochastic reward $R_t$ from the environment. Our goal is to find a (possibly stochastic) policy $\pi$ for selecting actions $a_t$ based on the history of observations gathered by the agents so far, $\cD_t = ((a_1, r_1), ... (a_{t-1}, r_{t-1}))$, that results in the largest possible expected cumulative reward $\E[\sum_{t} R_t]$. 

Building on previous work on \emph{latent bandits} (LB)~\citep{maillard2014latent,hong2020latent}, we assume that each instance of the LOB problem instance is associated with a discrete latent state $s \in \{1, ..., m\}$, adding structure to the problem, but unknown to the agent. For a \emph{fixed} instance $i$ with density $p_i$, the reward at time $t$ depends only on the chosen action $a_t$, $\forall a, t'\neq t : p_i(R_t \mid A_t=a) = p_i(R_{t'} \mid A_{t'}=a)$. Thus, for any fixed instance $i$, we can index the rewards by actions, rather than rounds, $R_a \sim p_i(R_a) =  p_i(R_t \mid A_t = a)$. We leave out the subscript $i$ when clear from context.

The agent's goal is \emph{regret minimization} with respect to the unknown expectations of action rewards $\mu_{a} \coloneqq \E[R_a]$, the optimal action $a^* = \argmax_{a} \mu_a$ and the optimal reward $\mu^* = \mu_{a^*}$, with regret
\begin{equation}\label{eq:regret_min}
\reg(T) \coloneqq \sum_{t=1}^T\E_{A_t \sim \pi(\cD_t)}[\mu^* - R_t] = \sum_{t=1}^T\E_{\pi}[\mu^* - \mu_{A_t}]~.
\end{equation}
We denote the full vector of action reward mean parameters $\bmu = [\mu_1, ..., \mu_k]^\top \in \bbR^k$.

Existing \emph{latent bandits}~\citep{hong2020latent,maillard2014latent} exploit the assumption that the latent state $s_i$ of an instance $i$ is sufficient to determine the parameters $\bmu$ of its reward distribution, and hence, that any two instances $i,j$ that share state, $s_i = s_j=s$ also shared reward distribution, $p_i(R_a) = p_j(R_a) = p(R_a \mid  \bmu_s)$. In this case, if the conditional distribution of rewards $p(R_a \mid \bmu_s)$ and the parameters $\bmu_s$ of each state $s$ are known, $S$ can be inferred for a new instance and used to select the optimal action.\footnote{\citet{hong2020latent} also study context-dependent rewards. In this work, we focus on other aspects of the problem.}
Based on these assumptions, \citet{hong2020latent} proposed the latent bandit UCB algorithm \mucb{} and Thompson sampling~\citep{thompson1933likelihood,agrawal2012analysis} variant \mts{}, both proven to incur at most $O(\sqrt{mT\log T})$ cumulative regret in the worst case.

\subsection{Latent order bandits: Relaxing shared state parameters to partial order constraints }
The assumptions of latent bandits are hard to justify in practice. Assuming that all instances with the same state share the same reward distribution often leaves variation between instances unaccounted for, or forces the number of latent states to be very large. For example, if two users of a streaming service agreed on their preference \emph{order} of every single movie, they cannot have distinct \emph{rating scales} or must belong to different states. Yet, the relative preferences are sufficient to identify an optimal recommendation! Moreover, applying latent bandits requires that a full, accurate latent variable model is available but learning such a model is nontrivial~\citep{balciouglu2024identifiable}.

\begin{figure*}[t]
    \centering
    \begin{subfigure}{0.235\linewidth}
        \centering
        \includegraphics[width=1.\linewidth]{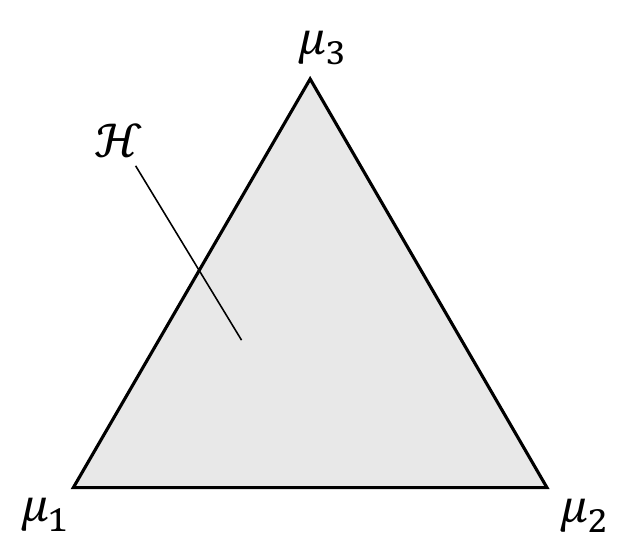}
        \caption{Multi-armed bandit (MAB). No constraints on reward means.}   
    \end{subfigure}
    \hfill
    \begin{subfigure}{0.235\linewidth}
        \centering
        \includegraphics[width=1.\linewidth]{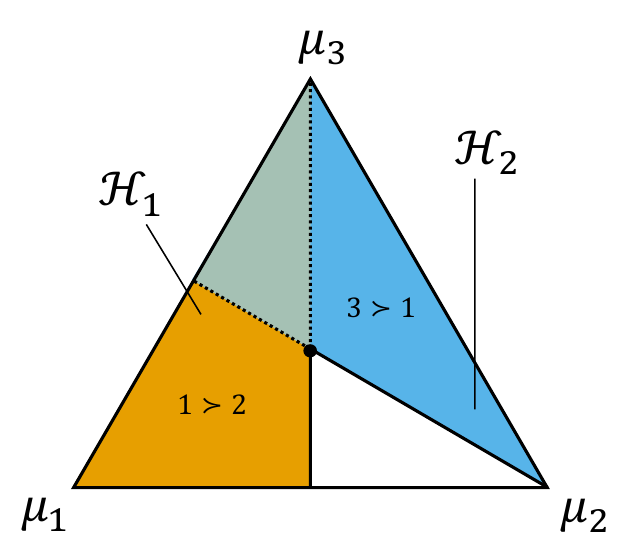}
        \caption{Latent order bandit (LOB) with \emph{partial} orders, unknown means.}    
    \end{subfigure}
    \hfill
    \begin{subfigure}{0.235\linewidth}
        \centering
        \includegraphics[width=1.\linewidth]{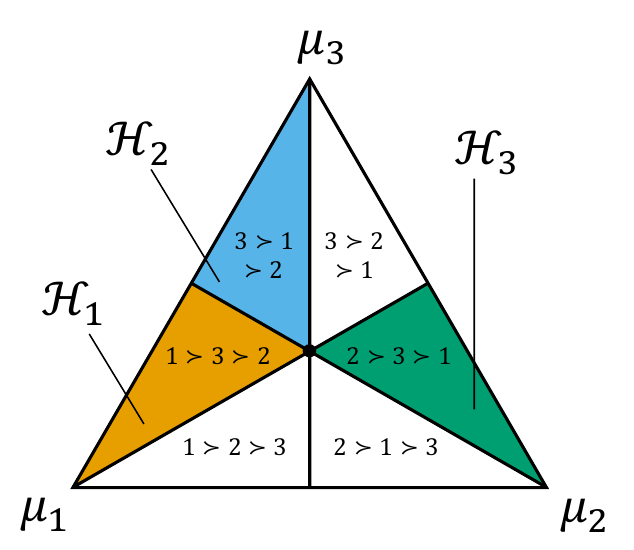}
        \caption{Latent order bandit (LOB) with \emph{total} orders, unknown means.}    
    \end{subfigure}    
    \hfill
    \begin{subfigure}{0.235\linewidth}
        \centering
        \includegraphics[width=1.\linewidth]{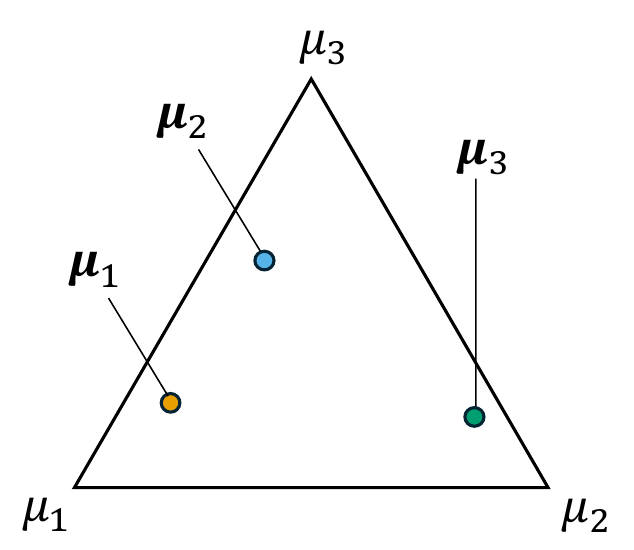}
        \caption{Latent bandit (LB): Fully known possible reward means.}    
    \end{subfigure}    
    \caption{The latent order bandit and related problems for reward means on the 2-simplex, $\bmu \in \Delta^{k-1}$. In the MAB problem, no structure is imposed. In LB, the full vector of reward means $\bmu_s$ is known a priori for each latent state $s$. In LOB, only the set of possible orders is known (colored segments).
    \label{fig:illustration}}
\end{figure*}

\paragraph{LOB instances.} We define an LOB problem instance by the unobserved latent state $s \in [m]$ and reward distributions $\cP = (P_1, ..., P_k)$ of the $k$ actions. For exposition, we focus on Gaussian rewards with equal variance $\sigma^2$ where $P_a = \cN(\mu_a, \sigma^2)$ and, for the remainder of the paper, represent an LOB instance as a tuple $(s, \bmu)$ of the latent state $s$ and reward means $\bmu$.\footnote{Our methods and analysis readily generalize to sub-Gaussian rewards. In Appendix~\ref{app:relative}, we briefly discuss problem variants with relative (dueling) feedback but do not consider that further here.}
Each state $s$ is associated with a known partial \emph{order} of actions, determined by constraints on their expected rewards $\bmu$. We define the order of state $s$ as a set $O_s = \{e_{s,j}\}_{j=1}^{n_s}$ of pairwise relations $e_{s,j} \in \cA \times \cA$ where $e_{s,j} = (a,a')$ indicates that $\mu_a > \mu_{a'}$ under $s$. $O_s$ is assumed reflexive, antisymmetric, and transitive for all $s$. We use $a \prec_s a'$ to denote that $\mu_a > \mu_{a'}$ can be inferred by the partial order of state $s$.

A problem instance $(s, \bmu)$ has arm parameters $\bmu \in \cH_s  \coloneqq \{\bmu\in \bbR^k : \forall (a,a') \in O_s, \mu_a >  \mu_{a'}\},
$
with $\cH_s$ the set of all parameters consistent with the partial order $O_s$. 
Two instances $(s, \bmu), (s', \bmu')$ with the same latent state, $s=s'$, are guaranteed to have the same partial order $O_s = O_{s'}$ but may have different reward parameters, $\bmu \neq \bmu'$. This allows for modeling individual rating scales, e.g., two users agree on their movie preferences, but not on what is the bar for ``five stars''. 
We assume that all latent states have mutually incompatible orders: $\forall s\neq s' : O_s \perp O_{s'}$, meaning that there exists at least one pair of arms $a, a'$ such that $a \prec_s a'$ and $a \succeq_{s'} a'$.
An order $O$ is \emph{total} if the relation between \emph{any} pair of arms $(a,a')$ can be inferred from $O$, i.e., the list of arms can be fully sorted according to their means. 
If fully determined by $O_s$, let $a^*_s$ denote the optimal arm for state $s$.



\label{sec:lowerbound}
At the start of learning, the agent does not know ($s, \bmu$) but is given access to the set of possible partial orders $\{O_s\}_{s \in \cS}$. This knowledge is critical to improve over unstructured bandits. We illustrate MAB, LB and LOB instances in Figure~\ref{fig:illustration} for the special case of reward means in the 2-dimensional simplex. \emph{Without} knowledge of $O_s$, the LOB problem would coincide with the classical MAB problem and is otherwise a strict generalization of the latent bandit problem. 
These differences directly influence an instance-specific lower bound on the asymptotic regret:
\begin{thmthm}[adapted from \citet{maillard2014latent,agrawal1989asymptotically}]\label{thm:lower_bound}
    Consider a problem instance $(s, \bmu)$ with sub-optimality gaps $\Delta_a=\mu^* - \mu_a$. For any uniformly good control scheme, i.e., that achieves $\reg(T) = o(T^b)$ for any $b>0$, \begin{equation}\label{eq:lower_bound}
    \liminf_{T\rightarrow \infty} \frac{\reg(T)}{\log T} \geq \hspace{-0.25em}\min_{\bw \in \cP(\cA)} \max_{\lambda \in \mathrm{Alt}(s,\bmu)} \hspace{-0.25em}\frac{ \sum_{a\in \cA}w_a \Delta_a}{\displaystyle \sum_{a\in \cA}\hspace{-0.5em} w_a \KL(\mu_a || \lambda_a)} 
    \end{equation}
    where $\mathrm{Alt}(s,\bmu)$ is the set of alternative problem instances, depending on the problem type (defined below). $w_a$ represent action proportions drawn from the probability simplex $\cP(\cA)$ on $\cA$. $\KL(\mu_a || \lambda_a)$ is the KL-divergence between unit-variance Gaussian distributions with  means $\mu_a, \lambda_a$.
\end{thmthm}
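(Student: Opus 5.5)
The plan is the standard change-of-measure argument of \citet{agrawal1989asymptotically}, in the form used by \citet{maillard2014latent}. Here the set of alternatives is $\mathrm{Alt}(s,\bmu)=\{(s',\blambda): \blambda\in\cH_{s'},\ \argmax_a \lambda_a \neq a^*\}$ for LOB, with $\cH_{s'}$ replaced by $\bbR^k$ for MAB and by the finite set $\{\bmu_{s'}\}$ for LB. Let $N_a(T)$ be the number of pulls of arm $a$ up to time $T$, and recall the regret decomposition $\reg(T)=\sum_a \E_{\bmu}[N_a(T)]\Delta_a$. Fix an alternative $\blambda\in\mathrm{Alt}(s,\bmu)$. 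The divergence decomposition for bandits then gives
\begin{equation*}
\sum_{a} \E_{\bmu}[N_a(T)]\,\KL(\mu_a\|\lambda_a) = \KL(\bbP_{\bmu}^{T}\|\bbP_{\blambda}^{T}).
\end{equation*}
By the Bretagnolle--Huber inequality applied to the event $E=\{N_{a^*}(T)\le T/2\}$, the right-hand side is at least $\log\frac{1}{2(\bbP_{\bmu}(E)+\bbP_{\blambda}(E^c))}$.

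Next we use uniform goodness. Under $\bmu$, $E$ forces at least $T/2$ suboptimal pulls, so Markov's inequality gives $\bbP_{\bmu}(E)\le 2\reg_{\bmu}(T)/(T\Delta_{\min})=o(T^{b-1})$. Under $\blambda$, $a^*$ is suboptimal, so $E^c$ forces at least $T/2$ pulls of a suboptimal arm, and likewise $\bbP_{\blambda}(E^c)=o(T^{b-1})$. This step needs $\blambda$ to be a legitimate instance of the problem class, so that uniform goodness applies to it; that is exactly why $\mathrm{Alt}$ is restricted to $\bigcup_{s'}\cH_{s'}$ (or to the LB means). Taking $b\to0$ yields
\begin{equation*}
\liminf_T \frac{1}{\log T}\sum_a \E_{\bmu}[N_a(T)]\,\KL(\mu_a\|\lambda_a)\ge 1 \quad\text{for every } \blambda\in\mathrm{Alt}(s,\bmu).
\end{equation*}

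To convert this into \eqref{eq:lower_bound}, write $\E_{\bmu}[N_a(T)]=c_T w^T_a$ with $c_T=\sum_a\E[N_a(T)]$ and $\bw^T\in\cP(\cA)$. Then $\reg(T)/\log T=(c_T/\log T)\sum_a w^T_a\Delta_a$, while the constraint says $(c_T/\log T)\sum_a w_a^T\KL(\mu_a\|\lambda_a)\gtrsim 1$ uniformly over $\blambda$. Dividing, we get $\reg(T)/\log T\gtrsim \sum_a w^T_a\Delta_a/\inf_{\blambda}\sum_a w^T_a\KL(\mu_a\|\lambda_a)$, which is $\max_{\blambda}$ of the ratio. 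This is in turn at least the $\min_{\bw}\max_{\blambda}$. A liminf over a convergent subsequence of $\bw^T$ (the simplex is compact) finishes the argument.

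The main obstacle is the uniformity in $\blambda$ needed when passing the constraint through the infimum. For LB the set is finite, so there is no issue. For LOB, $\mathrm{Alt}$ is an open, unbounded union of cones $\cH_{s'}$, and the strict inequalities mean the infimum may only be approached in the limit. I would handle this by fixing $\epsilon>0$, choosing $\blambda_\epsilon$ that attains the infimum up to $\epsilon$ for the limiting $\bw$, and using continuity of $\bw\mapsto\sum_a w_a\KL(\mu_a\|\lambda_a)$ for fixed $\blambda$. The per-$\blambda$ inequality is established before taking limits in $T$, so only pointwise-in-$\blambda$ statements are needed, and $\epsilon\to0$ completes the proof.
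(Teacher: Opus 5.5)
The paper gives no proof of its own; it only cites the Agrawal et al./Graves--Lai bound. Your change-of-measure step is the standard route behind that bound, and it is sound. For each fixed $\blambda\in\mathrm{Alt}(s,\bmu)$ under which $a^*$ is strictly suboptimal, you correctly get $\liminf_T \frac{1}{\log T}\sum_a \E_{\bmu}[N_a(T)]\,\KL(\mu_a\|\lambda_a)\ge 1$. For LB, $\mathrm{Alt}$ is finite, and the liminf of a minimum over finitely many alternatives equals the minimum of the liminfs, so your division step is justified there.

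\textbf{The gap.} The conversion fails for MAB and LOB, where $\mathrm{Alt}$ is infinite and non-compact, and your proposed fix does not repair it. With $c_T=\sum_a\E_{\bmu}[N_a(T)]=T$, uniform goodness gives $\E_{\bmu}[N_b(T)]\le \reg(T)/\Delta_b=o(T^{\beta})$ for every $b\neq a^*$ and every $\beta>0$. Hence $\bw^T\to e_{a^*}$ (the vertex of the simplex at $a^*$) along \emph{every} subsequence, and the compactness step is vacuous.

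At $e_{a^*}$ the numerator $\sum_a w_a\Delta_a$ is $0$. Also $\inf_{\blambda\in\mathrm{Alt}}\KL(\mu_{a^*}\|\lambda_{a^*})=0$, attained by every alternative with $\lambda_{a^*}=\mu_{a^*}$ whatever its other coordinates. So a $\blambda_\epsilon$ that is $\epsilon$-optimal for the limiting $\bw$ carries no information. Continuity in $\bw$ only tells you that the numerator and denominator of $\sum_a w^T_a\Delta_a/\sum_a w^T_a\KL(\mu_a\|\lambda_{\epsilon,a})$ both tend to $0$. If instead $\lambda_{\epsilon,a^*}\neq\mu_{a^*}$, the ratio tends to $0$ and you get only the trivial bound. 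The value of the min--max near $e_{a^*}$ depends on the \emph{direction} in which $\bw^T$ approaches it, i.e.\ on the $\log T$-scale allocation to suboptimal arms. Normalizing by $T$ discards exactly this information.

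\textbf{The repair.} Normalize the suboptimal arms by $\log T$ and treat $a^*$ separately.

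Let $T_j$ realize a finite $\liminf_T \reg(T)/\log T$. Since $\Delta_b>0$ for $b\neq a^*$, the quantities $\E_{\bmu}[N_b(T_j)]/\log T_j$ are bounded. Along a further subsequence they converge to some $n_b\ge 0$, with $\liminf_T\reg(T)/\log T=\sum_{b\neq a^*}n_b\Delta_b$.

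Apply your per-$\blambda$ inequality only to alternatives with $\lambda_{a^*}=\mu_{a^*}$, for which the order-$T$ term of $a^*$ drops out. This gives $\sum_{b\neq a^*}n_b\KL(\mu_b\|\lambda_b)\ge 1$ for all such $\blambda$.

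To return to the simplex form, take $\bw$ proportional to $(n_b)_{b\neq a^*}$ with extra mass $M$ on $a^*$, and let $M\to\infty$. You then need that near-minimizers of $\sum_{b\neq a^*}n_b\KL(\mu_b\|\lambda_b)+M\,\KL(\mu_{a^*}\|\lambda_{a^*})$ over $\mathrm{Alt}$ can be replaced by alternatives with $\lambda_{a^*}=\mu_{a^*}$ at vanishing cost. For LOB (and MAB) this holds because each $\cH_{s'}$ and the condition $\argmax_a\lambda_a\neq a^*$ are invariant under adding the same constant to every coordinate. You can therefore shift $\blambda$ by $\mu_{a^*}-\lambda_{a^*}\to 0$. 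This is the step where the structure of $\mathrm{Alt}_{\mathrm{LOB}}$ actually enters, and it is missing from your argument.
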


For standard MAB, the alternative set contains \emph{all} parameter vectors with a different optimal arm $\mathrm{Alt}_{\mathrm{MAB}}(\bmu) = \{\blambda : \argmax_a \lambda_a \neq \argmax_a \mu_a\}$, as instances are not associated with latent states. For latent bandits, $\mathrm{Alt}_{\mathrm{LB}}(s, \bmu) = \{\blambda^{s'}\}_{s' \neq s}$, where $\blambda^{s'} \in \cH_{s'}$ is a single \emph{fixed} parameter for each unrealized latent state $s'$. For LOB, the set of alternative instances contain only parameters $\mu'$ that have different optimal arms from $\bmu$ \emph{and} are consistent with the partial order of a state other than the true state's, $\mathrm{Alt}_{\mathrm{LOB}}(s, \bmu) = \{\blambda \in \cup_{s'}\cH_{s'}  : \argmax_a \lambda_a \neq \argmax_a \mu_a \}$. It follows that,
$$
\mathrm{Alt}_{\mathrm{LB}}(s, \bmu) \subseteq \mathrm{Alt}_{\mathrm{LOB}}(s, \bmu) \subseteq \mathrm{Alt}_{\mathrm{MAB}}(\bmu)~,
$$
\emph{and the corresponding lower bounds for the regret increase in the same order}, provided that the assumptions of each problem class hold. That is, the lower bound for standard latent bandits holds if instances of the same state have the \emph{same} reward distributions. If violated, even if the partial order constraint of LOB holds, the performance of LB algorithms can be arbitrarily bad, see Experiments.

When the union of state parameter sets imposes no constraints,  $\bigcup_{s'}\cH_{s'} = \bbR^k$, the LOB problem is analogous to an MAB problem without structure. This can happen if the partial order constraints are too few or the number of latent states is too large.
Conversely, if each $O_s$ prescribes a distinct total order, and $m \ll k!$, the closest confusing instance $\lambda^*$, maximizing \eqref{eq:lower_bound}, is likely to have an order with a large number of inversions to $\bmu$, the $\KL$ term will be large, and the bound small---the problem is easier to solve.  As a toy example, suppose that total orders $O_s$ are randomly selected without replacement from all possible permutations of $\cA$. Then, the probability that two such orders differ in only two positions is $\binom{k}{2}/(k! - 1)$ (see Appendix~\ref{app:similarity_orders}). For large $k$, this probability is vanishingly small. For example, with $k = 10$, this probability evaluates to $\frac{45}{3,628,799} \approx 1.24 \times 10^{-5}$. Given that $m$ is typically much smaller than $k!$, the ground-truth state will stand out even more as $k$ grows. 

%
%
\section{Algorithms}
\label{sec:method}
We design two algorithms for the LOB problem, one based on upper confidence bounds (\lobucb{}), for which we perform a regret analysis, and one based on posterior (Thompson) sampling (\lobts{}).

\begin{figure}[t]
    \centering
    \begin{minipage}[t]{0.48\textwidth}        
        \begin{algorithm}[H]
        \caption{Latent order UCB bandit (\lobucb{})}
        \label{alg:lobucb}
        \begin{algorithmic}[1]
        \State \textbf{Input:} Confidence parameter $c$ 
        \State $\forall a : N_a = 0$, $\hmu_a = 0$
        \State $\forall s : \mbox{Max}(s) = \{a : \not\exists a', a' \prec_s a\}$
        \vspace{0.27em}
        \For{$t=1, ..., T$}
           \For{$a \in \cA_t$}
                \State $B_a(t) = c\sqrt{4\log t/N_a(t)}$
                \State $L_a(t) \leftarrow \hbmu_a - B_a(t)$
                \State $U_a(t) \leftarrow \hbmu_a + B_a(t)$
            \EndFor
            \vspace{0.27em}
            \State $\cS_t = \{ s : \cH_s \cap \prod_a [L_a(t), U_a(t)] \neq \emptyset\}$  
            \State $\cA_t = \cup_{s \in \cS_t} \; \mbox{Max}(s)$
            \vspace{0.27em}
            \State Let $a_t \gets \argmax_{a\in \cA_t} U_a(t)$
            \State Play arm $a_t$, and observe $r_t$
            \State $\hmu_{a_t} \gets (N_{a_t}\hmu_{a_t}(t) + r_t)/(N_{a_t}+1)$
            \State $N_{a_t} \gets N_{a_t} + 1$
        \EndFor
        \end{algorithmic}
        \end{algorithm}
    \end{minipage}
    \hfill
    \begin{minipage}[t]{0.50\textwidth}
        \begin{algorithm}[H]
        \centering
        \caption{Latent order TS bandit  (\lobts{})}\label{alg:lobts}%
        \begin{algorithmic}[1]
            \State \textbf{Input:} Confidence parameter $c$ 
            \State $\cD_1 \gets \emptyset, \hbmu(1) \gets \boldsymbol{0}, \forall a: N_a \gets 0$
            \For{$a=1, ..., k$}
            \State $M_a \gets \{\bmu : a = \argmax_{a'} \mu_{a'}\}$ 
            \State $\cS_a = \{s : \not\exists a'\neq a : a' \prec_{s} a\}$
            \State $\Sigma_a \gets \cup_{s \in \cS_a}\cH_s, $ 
            \EndFor
            \For{$t=1, ..., T$}
            \For{$a=1, ..., k$}
                \State \quad $\tbmu^a \gets \displaystyle{\argmax_{\bmu \in M_a \cap \Sigma_a}} \cL(\cD_t  \mid \bmu, \bar{a}_t; c)$ 
            \State \quad Let $\hat{p}(A=a) \propto \cL(\cD_t  \mid \tbmu^a, \bar{a}_t; c)$
            \EndFor
            \State Sample action $a_t \sim \hat{p}(A)$
            \State Play arm $a_t$, and observe $r_t$
            \State {\small $N_{a_t} \gets N_{a_t} + 1,  \cD_{t+1} \gets \cD_{t} \cup \{(a_t, r_t)\}$}
            \EndFor
        \end{algorithmic}
    \end{algorithm}
    \end{minipage}
\end{figure}

%
%
\subsection{An upper-confidence bound latent order bandit algorithm}

In Algorithm~\ref{alg:lobucb}, we propose a UCB procedure for the LOB problem (\lobucb{}) that directly exploits knowledge of the latent state orders $O_s$ to improve exploration.\footnote{A previous version of this pre-print contained a preliminary version of lobUCB. This has since been removed.} The structure of \lobucb{} will be familiar from classical UCB algorithms~\citep{lattimore2014bounded}, but \emph{exploits the knowledge of feasible partial orders to dynamically restrict the active arm set} based on two observations.

Let $L(t), U(t) \in \bbR^{k}$ be vectors of lower and upper confidence bounds for the set of actions, given the current empirical means and play counts. With moving radius $B_a(t) = c\sqrt{4\log t / N_a(t)}$, define
$$
L_{a}(t) \coloneqq \hmu_a(t) - B_a(t)~,
\quad
U_{a}(t) \coloneqq \hmu_a(t) + B_a(t)~.
$$
First, any mean parameter $\mu$ that is both feasible, contained in $\cH \coloneqq \cup_s \cH_s$, and probable with high confidence falls in the intersection of $\cH$ and these confidence intervals. For example, a pair of arms $a, a'$ and a state $s$ such that $a \prec_s a'$ (a has higher mean under $s$) should satisfy $L_{a'}(t) \leq U_{a}(t)$, w.h.p. If there is a combination $(a, a',s)$ that violates this, $s$ is likely incorrect. We say that a state $s$ is \emph{feasible at $t$} if no such violation exists, and define the set of feasible states at $t$ as 
$$
\cS_t = \left\{ s : \cH_s \cap \prod_{a=1}^k [L_a(t), U_a(t)]\neq \emptyset \right\}~, 
$$
where $\prod$ indicates the cartesian product of the sets.
Second, any arm $a$ that cannot be maximal under any state $s$ need not ever be played. Define $\mbox{Max}(s) \coloneqq \{a : \not\exists a' , a' \prec_s a$\} as the set of potential maximal arms under $s$. For total orders, $\mbox{Max}(s)$ are singleton sets.

Combining the two observations above, we can dynamically restrict the arm set of a UCB algorithm to only consider arms at time $t$ that are potentially maximal under at least one feasible state at $t$, 
$$
\cA_t \coloneqq \{a : \exists s\in \cS_t, a\in \mbox{Max}(s)\}~,
$$
with $\cS_0 = [m]$ and $\cA_0$ the set of arms potentially maximal in some state based only on orders. In Algorithm~\ref{alg:lobucb}, we outline the \lobucb{} algorithm, performing UCB selection of arms from $\cA_t$. 

The regret of \lobucb{} varies with the dynamic number of potentially maximal arms $|\cA_t|$, bounded from above by $q \coloneqq |\cA_0|$. If all state orders $O_s$ are total, $q = \min(k, m)$ since every state has exactly one maximal arm and there are only $k$ arms to choose from. In all cases, $q \leq \min(k,m)$. Below, we bounded the regret of \lobucb{} based on this quantity. 
\begin{thmthm}\label{thm:ucbregret}
    Consider an LOB instance $(s^*, \bmu^*)$ with $k$ arms and $m$ states. Suppose that each possible state $s$ has a partial order $O_s$ and unique optimal arm $a^*_s$. Let $q=|\cA_0| \leq \min(k,m)$. Assume rewards are sub-Gaussian with variance bounded by $\sigma^2$ and define $\Delta_a = \mu^*_{a^*} - \mu^*_a$ with $\Delta_a > 0$ for $a \neq a^*_{s^*}$. Then, Algorithm~\ref{alg:lobucb} using $c=\sigma$ has expected instance-dependent regret
    $$
    R_T \leq \cO(\sum_{a\in \cA_0} \frac{\sigma^2 \log T}{\Delta_a}+q\Delta_{\mathrm{max}}
    )
    $$  
    and worst-case instance-independent regret,
    $$
    R_T \leq \cO(\sigma \sqrt{\min(k,m)T\log T}).
    $$      
\end{thmthm}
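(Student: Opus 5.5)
The proof follows the classical UCB analysis of \citet{lattimore2014bounded}, with two adaptations: the active set $\cA_t$ replaces the full arm set, and the optimal arm must remain active.

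\textbf{Step 1: a good event and its consequences.} Define the good event $G$ on which, for every round $t$ and every arm $a$, $\mu^*_a \in [L_a(t), U_a(t)]$. With $c=\sigma$ and radius $\sigma\sqrt{4\log t/N_a(t)}$, sub-Gaussian concentration combined with a union bound over arms and over possible values of $N_a(t)$ shows that the failure probability at round $t$ is $O(k t^{-1})$ or better. Summing over $t$, the cost of $G^c$ is bounded by a constant times $\Delta_{\max}$. I would sharpen the union bound so that this constant scales with $q$, using that only arms in $\cA_0$ are ever played.

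On $G$, the true vector $\bmu^*$ lies in $\cH_{s^*}\cap\prod_a[L_a(t),U_a(t)]$, so $s^*\in\cS_t$ for every $t$. Since $a^*_{s^*}$ is the unique maximal element of $O_{s^*}$, it belongs to $\mbox{Max}(s^*)\subseteq \cA_t$. The optimal arm therefore always competes, and $U_{a^*}(t)\ge \mu^*$. Moreover, only arms in $\cA_t\subseteq\cA_0$ are ever played, which gives the restriction of the sum to $\cA_0$.

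\textbf{Step 2: counting suboptimal plays.} Fix a suboptimal $a\in\cA_0$. If $a$ is played at round $t$ on $G$, then $U_a(t)\ge U_{a^*}(t)\ge\mu^*$, and also $U_a(t)\le \mu^*_a+2B_a(t)$. Together these force $\Delta_a\le 2B_a(t)$, i.e.\ $N_a(t)\le 16\sigma^2\log T/\Delta_a^2$. Multiplying by $\Delta_a$ and summing over $a\in\cA_0$ gives the instance-dependent term $\sum_{a\in\cA_0}\sigma^2\log T/\Delta_a$. The additive $q\Delta_{\max}$ term covers the initial plays of each arm in $\cA_0$, which occur because $N_a=0$ forces $U_a=\infty$, together with the failure-event contribution from Step 1.

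\textbf{Step 3: the worst-case bound.} I would use the standard threshold argument. Arms with $\Delta_a\le\delta$ contribute at most $T\delta$ in total. Arms with $\Delta_a>\delta$ contribute $O(\sigma^2\log T/\delta)$ each, and there are at most $q$ of them, giving $O(q\sigma^2\log T/\delta)$. Choosing $\delta=\sigma\sqrt{q\log T/T}$ yields $O(\sigma\sqrt{qT\log T})$. Since $q\le\min(k,m)$, the stated bound follows.

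\textbf{Main obstacle.} The delicate step is Step 1, specifically handling the random, data-dependent active set. Two issues arise. First, the concentration must hold uniformly over the random counts $N_a(t)$, which requires a union bound over count values (or a peeling argument) with the $\log t$ radius. Second, on $G^c$ the optimal arm may be eliminated from $\cA_t$, and the argument must show that this contributes only $O(q\Delta_{\max})$ in expectation rather than linear regret. I would first try the direct approach: bound $\sum_t \bbP(\exists a: \mu^*_a\notin[L_a(t),U_a(t)])$ by a convergent series. The constant $4$ in the radius should leave enough slack for the extra union bound over count values, giving a failure probability of order $t^{-2}$ per round and hence a summable total.
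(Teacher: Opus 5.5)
Your architecture is the same as the paper's. You use a per-round good event on which every arm's interval contains $\mu^*_a$, so that $\bmu^*\in\cH_{s^*}\cap\prod_a[L_a(t),U_a(t)]$, hence $s^*\in\cS_t$ and $a^*\in\mbox{Max}(s^*)\subseteq\cA_t$. On good rounds the UCB count gives $N_a(t)\le 16\sigma^2\log t/\Delta_a^2$ for arms of $\cA_0$, each bad round costs $\Delta_{\max}$, and the threshold trick gives the worst-case rate. Make sure $G$ is the per-round event $G_t$ and that you charge $\Delta_{\max}\sum_t\Pr[G_t^c]$. A single event over all rounds has constant failure probability under a $\log t$ radius (already at $t=2$), so it would only give $T\Delta_{\max}\Pr[G^c]$.

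The step that fails is your resolution of the ``main obstacle.'' With $c=\sigma$, the tail for a fixed count $n$ is exactly
\[
\Pr\bigl[|\hmu_{a,n}-\mu^*_a|>\sigma\sqrt{4\log t/n}\bigr]\le 2\exp(-2\log t)=2t^{-2},
\]
with no slack left. A union bound over the at most $t$ possible values of $N_a(t)$ therefore gives $2t^{-1}$ per arm and round, not $t^{-2}$. Summing, $\sum_{t\le T}2q/t\asymp q\log T$, which is not a constant; your Step 1 already writes $O(kt^{-1})$ and then asserts a constant sum. Done this way, bad rounds cost $O(q\Delta_{\max}\log T)$. The worst-case rate survives for bounded gaps, but the stated instance-dependent bound with additive $q\Delta_{\max}$ does not.

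You are right that uniformity over the random count is needed. The paper's proof simply plugs the random $N_a(t)$ into a fixed-$n$ tail bound and reads off $2t^{-2}$. The tool that actually works is the peeling argument you mention only in passing. Let $S_n$ be the centered sum of the first $n$ rewards of arm $a$, and split $n\le t$ into blocks $[\eta^j,\eta^{j+1})$. On each block apply the maximal inequality $\Pr[\max_{n\le \eta^{j+1}}S_n\ge x]\le\exp\bigl(-x^2/(2\eta^{j+1}\sigma^2)\bigr)$ with $x=\sigma\sqrt{4\eta^j\log t}$, which is valid because $n\ge\eta^j$ inside the block. This gives $2t^{-2/\eta}$ per block, over $O(1+\log t/\log\eta)$ blocks. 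Any $\eta\in(1,2)$ makes $\sum_t t^{-2/\eta}\log t$ finite. The expected number of bad rounds is then $O(q)$, and the $q\Delta_{\max}$ term follows.
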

\begin{proofsketch}
    Our proof involves reasoning about two events, either of which occurs at every round $t$: i) The good event $\cG$, on which the true state $s^*$ is feasible, and ii) the bad event $\cG^c$, where $s^*$ may be discarded. On the good event, \lobucb{} performs standard UCB selection over $|\cA_t| \leq q$ arms, and benefits from the same analysis. On the bad event, in the worst case, the arm with gap $\Delta_{\mathrm{max}} = \max_{a\neq a^*} \Delta_a$ is chosen. The bad event contributes only constant regret. 
\end{proofsketch}

A full proof is given in Appendix~\ref{app:proof_lob_ucb}. Theorem~\ref{thm:ucbregret} shows that the regret of \lobucb{} is sublinear in $T$, scaling as $O(\sqrt{\min(k,m) T\log T})$ in the (instance-independent) worst case. Disregarding log-factors, this matches both the best achievable regret of the unstructured MAB problem, $\Theta(\sqrt{kT})$, reached by, e.g., upper-confidence bound (UCB) maximization and posterior (Thompson) sampling methods~\citep{lattimore2020bandit}, and results for the standard latent bandit problem. For the latter, \citet{hong2020latent} showed that the worst-case asymptotic regret of the  \mts{} and \mucb{} algorithms is $O(\sqrt{mT\log T})$. However, it should be noted, that when all states have total orders and distinct optimal arms (consistent with LB assumptions), simply restricting the arm set to the $\min(k,m)$ arms that are optimal in one of the states and running a standard MAB algorithm achieves $\Theta(\sqrt{\min(m,k)T})$ worst-case regret, (see Appendix~\ref{app:topm}). Algorithm~\ref{alg:lobucb} is conservative in that it does not exploit the joint likelihood of the observed order of multiple arms. 

Empirically, even in problems with $m \gg k$, knowledge of possible partial orders $\{O_s\}_{s\in [m]}$ (as in \lobucb{}) or the full posterior distribution $p(S \mid \cD_t)$ (as in \mts{}) can massively improve empirical performance over unstructured MAB algorithms, as we see in Section~\ref{sec:experiments}. For LB algorithms, the strong prior knowledge means that no parameter estimation, only inference, is performed for a new instance. For LOB algorithms, parameter estimation is still necessary since $\bmu$ is determined only up to a partial order given $s$, but its order constraints restrict the search space, as discussed in Section~\ref{sec:lowerbound}. 
 
%
%
\subsection{A posterior sampling LOB algorithm: \lobts{}}
\label{sec:thompson}

Although sublinear in worst-case regret, \lobucb{} can be improved upon further empirically by making better use of the likelihood of the observed rewards under each state. To this end, we propose a Thompson sampling heuristic (\lobts{}, Algorithm~\ref{alg:lobts}) that samples arms based on optimistic estimates of the posterior probability of the mean parameters under which each arm $a$ is optimal.

Recall that $\cD_T = ((a_1, r_1), ..., (a_T, r_T))$ denotes the data set of the first $T$ observations collected in a problem instance $(s, \bmu)$. The likelihood of the rewards in $\cD_T$ under $s$ with latent order $O_s$ is then
\begin{equation}\label{eq:likelihood}
\cL(\cD_T \mid S=s, \bar{A}_T = \bar{a}_T) = \prod_{t=1}^T p(r_t \mid a_t, s) = \int_{\bmu \in \cH_s} p(\bmu \mid s) \prod_{t=1}^T  p(r_t \mid a_t, \bmu, s) d\bmu~,
\end{equation}
where $\bar{A}_t$ and $\bar{a}_t$ are the sequences of random variables and observations of actions, respectively. In \mts{}, knowledge of each state's mean parameters allows for computing the posterior probability of latent states $p(S=s \mid \cD_t, \bar{a}_t)$ using \eqref{eq:likelihood}. A state $s_t$ is sampled, and the optimal arm $a_{s_t}^*$ under that state is played. In LOB, since $s$ only prescribes order constraints on $\bmu$, we must marginalize over the set of mean parameters that comply with $\cO_s$, but this means that computing \eqref{eq:likelihood} is intractable. We could appeal to Monte-Carlo sampling or variational inference~\citep{jordan1999introduction} for an approximation, provided that a parameter prior $p(\bmu \mid s)$ is known for each latent state $s$. However, in general, (i) no state parameter prior $p(\bmu \mid s)$ is available, and (ii), the optimal arm $a^*$ may not be known \emph{even if the true state is known}, since $s$ may only prescribe a \emph{partial} order to the arms.

Our strategy in \lobts{} is to compute an approximate posterior probability that a given arm $a$ is optimal, and sample arms to play according to that, based on optimistic estimates of the likelihood given the optimality of the arm. That $a$ is optimal, $a^* = a$, implies that $\bmu$ lies in the intersection of $M_a \coloneqq \{\bmu : a = \argmax_{a'} \mu_{a'}\}$, and the union of parameter sets for states $s$ under which $a$ \emph{could be} optimal, $\Sigma_a \coloneqq \cup_{s \in \cS_a}\cH_s$ where $\cS_a = \{s : \not\exists a'\neq a : a' \prec_{s} a\}$. Thus, 
\begin{align}\label{eq:likelihood_ub}
\cL(\cD_T \mid a^*=a, \bar{a}_T) \leq \sup_{\bmu \in M_a \cap \Sigma_a} \prod_{t=1}^T  p(r_t \mid a_t, \mu_{a_t}) = \prod_{t=1}^T  p(r_t \mid a_t, \tmu^a_{a_t}) = \cL(\cD_T \mid \tbmu^a, \bar{a}_T) ~.
\end{align}
where $\tbmu^a = \argmax_{\bmu \in M_a \cap \Sigma_a} \cL(\cD_t  \mid \bmu, \bar{a}_t)$. For Gaussian likelihoods with uniform variance, the negative log-likelihood 
$-\log \cL(\cD_t  \mid \bmu, \bar{a}_t) \propto \sum_{a}N_a(t) (\mu_a - \hmu_a(t))^2 \eqqcolon \|\bmu - \hbmu(t)\|^2_{N(t)}$, where $\hbmu(t)$ are the empirical reward means of each arm observed in $\cD_t$. Since $M_a \cap \Sigma_a = \cup_{s\in \cS_a}(M_a \cap \cH_s)$, we can compute $\tbmu$ by projecting $\hbmu$ onto $M_a \cap \cH_s$ for all $s$ under $\|\cdot\|^2_{N(t)}$, and letting
$$
\cdot, \tbmu^a  = \argmin_{s, \bmu \in M_a \cap \cH_s} \|\bmu - \hbmu(t)\|^2_{N(t)}~\mbox{for all}~a.
$$
Thus, computing $\tbmu^a$ can be done by solving multiple projection problems, just like for $\lobts{}$.
Based on the \emph{optimistic} bound on the likelihood in \eqref{eq:likelihood_ub}, we construct an approximate posterior probability for each arm being optimal, 
$\tilde{p}(a^*= a\mid \cD_t, \bar{a}_t) \propto \cL(\cD_t \mid \tbmu^a, \bar{a}_t)$. At each round of \lobts{}, we sample the arm $a_t$ to play from this distribution, completing Algorithm~\ref{alg:lobts}.
Finally, we note that this approximate posterior probability is not unbiased in general, since the likelihood bound $p(\cD_t \mid \tbmu_s) \geq p(\cD_t \mid s)$ affects also the normalization constant of the posteriors: The looser the bound for one state, the lower the estimated posterior probability of other states.

\paragraph{On the regret of \lobts{}.}
\label{sec:scaling}
We do not yet have a formal analysis of the regret of \lobts{}. Like for \lobucb{}, we expect the performance of \lobts{} to be affected by the nature and number of active order constraints resulting from the projections to each parameter set (ln.8 of Algorithm~\ref{alg:lobts}). As data accumulates in $\cD_t$, the projection distance to parameters complying with the true state $s^*$ will approach 0, while the distances to other states increase. Unlike \lobucb{}, \lobts{} exploits that the more order constraints differ between states, the more the playing of \emph{any} arm will distinguish the true latent state in terms of its likelihood of observed data. Indeed, we observe the effects of this empirically in Figure~\ref{fig:q_sweep}, where we successively increase the number of partial order constraints, and note a sharp decrease in the cumulative regret of \lobts{}, faster than for \lobucb{}.

\subsection{Time complexity of \lobucb{} and \lobts{}}
\label{sec:complexity}
The arm indices of \lobucb{} can trivially be computed in each round in $O(km)$ time. For \lobts{}, in each round, the mean projections (ln.8) can be performed by solving multiple partial isotonic regression problems ~\citep{barlow1972isotonic}, see Appendix~\ref{app:isotonic}. This is done once per action $a$ and state in which action $a$ could be optimal. Solving these problems dominate the time complexity of \lobts{}. For total orders of $k$ items, isotonic regression can be solved in $O(k)$ time, resulting in an (at worst) $O(mk^2)$ per-round complexity for \lobts{}. For partial orders, isotonic regression can be reduced to a maximum flow problem on a graph with $\kappa_s$ edges, where $\kappa_s = |\cO_s|$ is the number to the number of partial order constraints. This problem can, ignoring log factors, be solved in $\tilde{O}(\kappa^{3/2})$ time. In practice, we find that only performing the projection every $T_{proj}=10$ rounds,  sampling arms and updating parameters as normal, does not hurt the performance of either algorithm (see Appendix Figure~\ref{fig:pevery}). This heuristic speeds up the overall training by a factor $T_{proj}$.

%
%
\section{Experiments}
\label{sec:experiments}
We evaluate our algorithms \lobucb{} and \lobTS{} by comparing their performance to unstructured MAB algorithms \ucb{} and Thompson sampling with a Gaussian prior (\ts{})~\citep{lattimore2020bandit}, as well as latent bandit algorithms \mucb{} and \mts{} from \citet{hong2020latent}, which require access to the set of full parameter vectors $\{\bmu_s\}_{s=1}^m$ and the complete reward likelihood under each state as prior knowledge. We focus on estimating the value of using order constraints compared to no prior structure (\ucb{}, \ts{}) and to full latent variable models (\mucb{}, \mts{}). Our evaluation environments include diverse synthetic tasks and a semi-synthetic task compiled from the MovieLens data set~\citep{harper2015movielens}. All methods use the true reward variance as their uncertainty parameter. For a closer description of parameters and baseline methods, see Appendix~\ref{app:experiments}.

Each experiment is repeated $N$ times with independently sampled orders and reward parameters over a horizon of $T$ rounds, chosen based on the environment. We use instantaneous and cumulative regret sequences, and average regret at a fixed horizon $T$ as error metrics, reporting averages over $N$ independent instances with standard errors $\sigma/\sqrt{N}$, where $\sigma$ is the standard deviation, as error bars. 

\subsection{Synthetic environments}
First, we design a synthetic environment that is well-specified for both LB and LOB, with varying numbers of arms, $k$, and latent states, $m$. For each state, $s \in [m]$, a random \emph{total} order (permutation) $\bbP_s = (p_{s,1}, ..., p_{s,k})$ of the arm set $\cA = [k]$ is sampled without replacement, to impose constraints $\mu_{p_{s_1}} \geq \mu_{p_{s_2}} \geq ... \geq \mu_{p_{s_k}}$. Thus, $O_s$ contains the $k-1$ pairwise relations $O_s = \{(p_{s,a}, p_{s,a+1})\}_{a=1}^{k-1}$. The reward for each arm $a$ in $s$ is Gaussian $R_{s,a} \sim \cN(\mu_a, \sigma^2)$ with mean decreasing according to the permutation, $\mu_{p_{s,j}} = k\gamma - j\gamma$, where $\sigma, \gamma=1$ are constants. To simulate knowledge of partial orders, a random subset of $\lceil q(k-1) \rceil$ pairwise relations in $O_s$ are sampled and revealed to the \lob{} algorithms for an experiment parameter $q \in [0, 1]$.
Highlighted results are presented in Figure~\ref{fig:exp_synth_overview}.

\begin{figure*}
    \centering
    \begin{subfigure}{0.32\textwidth}
        \includegraphics[width=\textwidth]{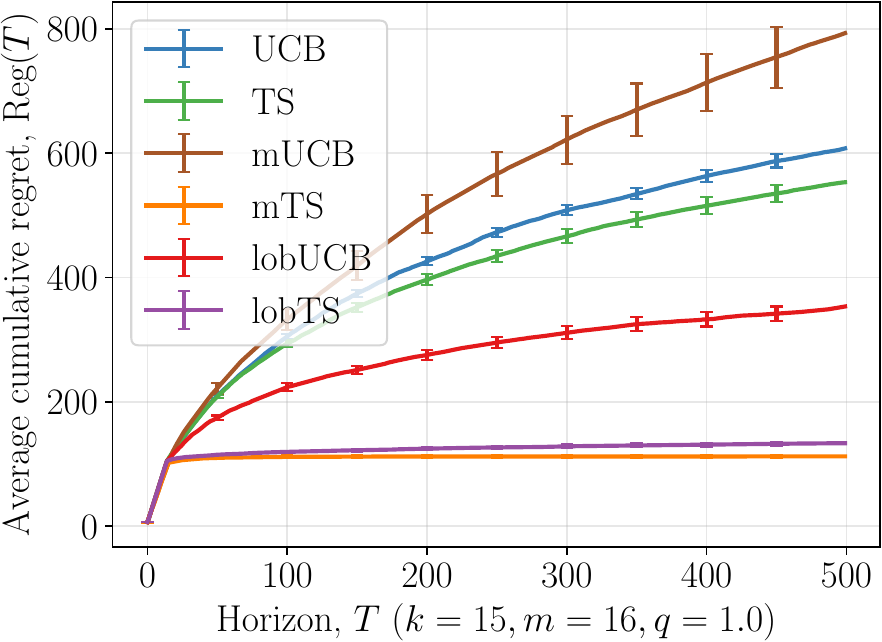}
        \caption{Regret vs horizon\label{fig:cum_regret_example}}%
    \end{subfigure}%
    \;
    \begin{subfigure}{0.32\textwidth}
        \includegraphics[width=\textwidth]{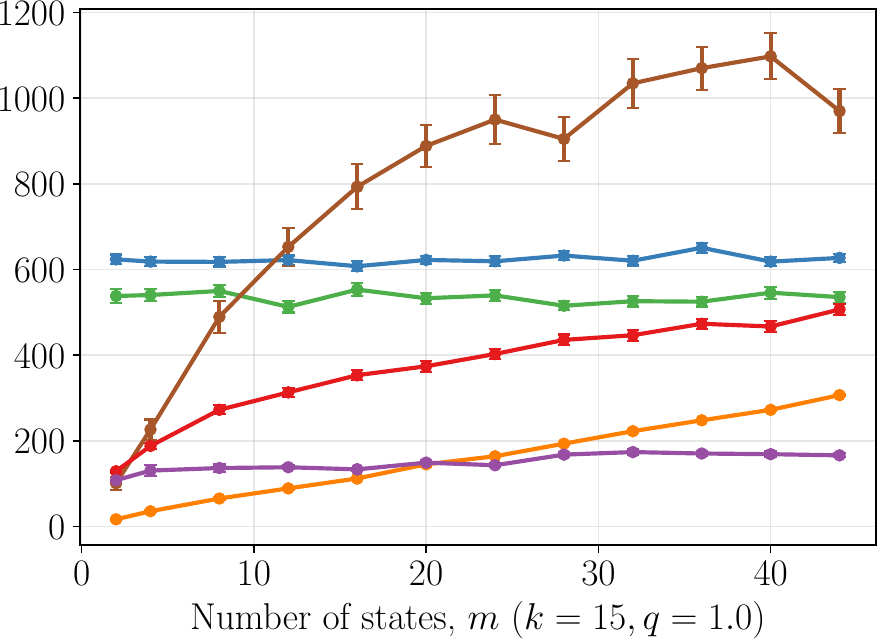}
        \caption{Regret vs \#states at $T=500$ \label{fig:m_sweep}}%
    \end{subfigure}%
    \;
    \begin{subfigure}{0.32\textwidth}
        \includegraphics[width=\textwidth]{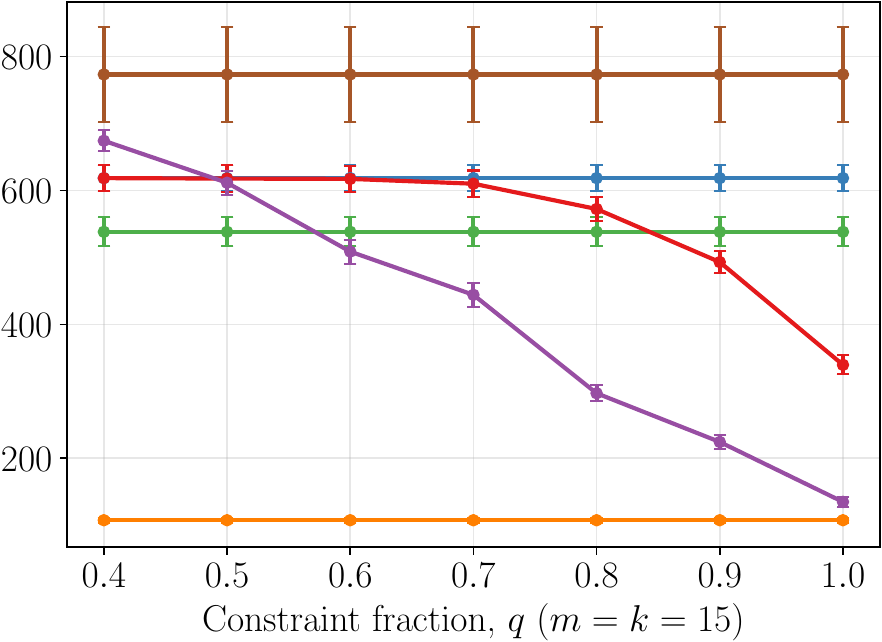}
        \caption{Regret vs constraints\label{fig:q_sweep} at $T=500$}%
    \end{subfigure}%
     \caption{Results on \textbf{Synthetic environments}. Average cumulative regret $\reg(T)$ compared to baselines as a function of $T$ (a) or at fixed $T=500$ (b,c) for different arm and state configurations. In (c), \lob{} algorithms\label{fig:cumulative_reward} receive partial order information containing a fraction $q$ of the total order constraints. Despite using less prior knowledge, and all methods being well-specified, \lobucb{} and \lobts{} consistently outperform \mucb{} and classical methods \ucb{} and \ts{}. 
     \label{fig:exp_synth_overview}}
\end{figure*}

\paragraph{Latent order bandits benefit from structure, improve with more constraints.}
In Figures~\ref{fig:m_sweep}--\ref{fig:q_sweep}, we see that \lobucb{} and \lobts{} consistently outperform unstructured baselines \ucb{} and \ts{} when sufficient structure is available (constraint fraction $q>0.6$ in Figure~\ref{fig:q_sweep}). Except for very small numbers of states ($m=2$), both \lob{} algorithms achieve lower average regret than \mucb{}, despite the latter requiring access to the full reward parameter for each latent state, not just a partial order (see Figure~\ref{fig:m_sweep}). In Figures~\ref{fig:m_sweep}--\ref{fig:q_sweep}, we see that for $m\geq k=15$, when \lobts{} is given access to a total order ($q=1$), the performance of \lobts{} can approach, and even overtake that of \mts{}. However, this comes down primarily to \mts{}'s longer initial exploration phase as $m$ grows (see Appendix Figure~\ref{fig:cumulative_regret_synthetic}).

\subsection{MovieLens experiments}
We construct a realistic task based on the MovieLens data sets~\citep{harper2015movielens} where instances represent users, actions $a_t$ represent choices of movie \emph{genre} ($k=19$), rewards $r_t$ represent ratings of a movie within the selected genre, and latent states $s$ represent groups of users with similar movie preferences. 
For each user $i$ in the data set, we compute their individual average rating $\mu^{0,i}_a$ of movies within each genre $a$. These user-specific genre means are clustered into latent states using k-means clustering with Kendall's Tau~\citep{kendall1938new} as distance and $m$ centroids, varied across experiments. Each resulting state $s$ has a total order $O_s$, corresponding to the centroid's mean parameters. To create a well-specified setting (A) for LOB, each user's means $\bmu^{0,i}$ are projected to comply with the order of their assigned cluster $s_i$, based on Euclidean distance to $\cH_{s_i}$, forming parameters $\bmu^i$. We also add a minimum gap $\gamma=0.01$ between the means of any two arms (genres). Rewards for an instance $i$ are generated as $r^i_t \sim \cN(\mu_{a_t}^i, \sigma^2)$ with $\sigma=2$. As a  misspecified setting (B), we leave the instance reward means as the original rating averages, $\bmu^i = \bmu^{0,i}$, which may not agree with any of the orders $\{O_s\}$, for any value of $m$. In both settings, for the UCB indices and posterior state probabilities of \mucb{} and \mts{}, respectively, we use the state centroids' mean rewards, $\bar{\bmu}_s = \sum_{i : s_i = s} \bmu_i$. Note that \mucb{} and \mts{} are \emph{misspecified} in both Settings A and B as \emph{individual rating scales differ within states} even if they \emph{agree on the order of arms} (movie genres). 

For the bandit evaluation, we use the first $N=100$ users from the data set and perform $T=4000$ rounds of choosing a movie genre with a random reward (rating) per user. The number $T$ is chosen intentionally \emph{very} large compared to what we may find in a real-world setting to allow unstructured bandits a chance to converge. See Appendix~\ref{app:movielens} for a full description of the MovieLens environment.

\begin{figure*}
    \centering
    \begin{subfigure}{0.48\textwidth}
        \includegraphics[width=\textwidth]{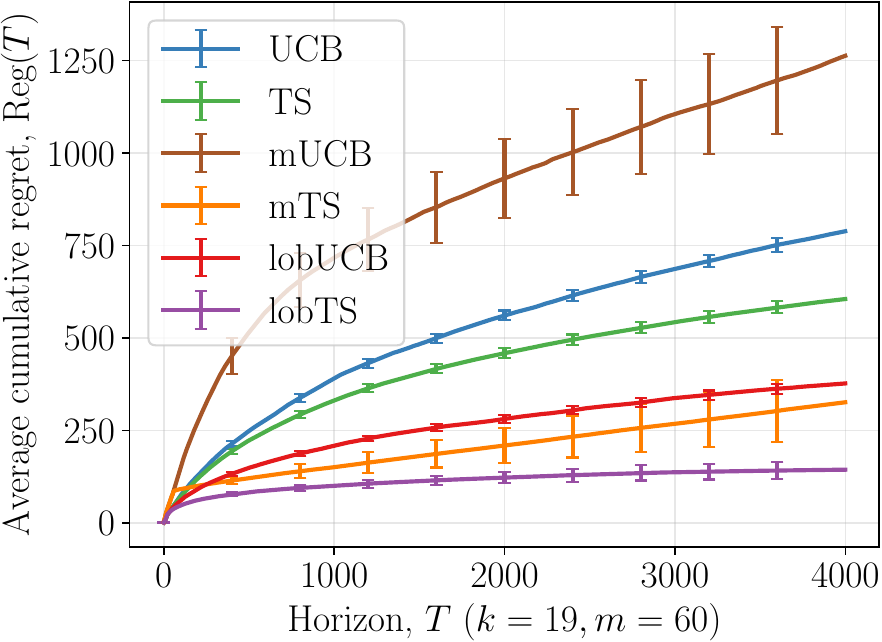}
        \caption{Regret vs horizon\label{fig:movie_regret} (well-specified)}%
    \end{subfigure}%
    \;
    \begin{subfigure}{0.48\textwidth}
        \includegraphics[width=\textwidth]{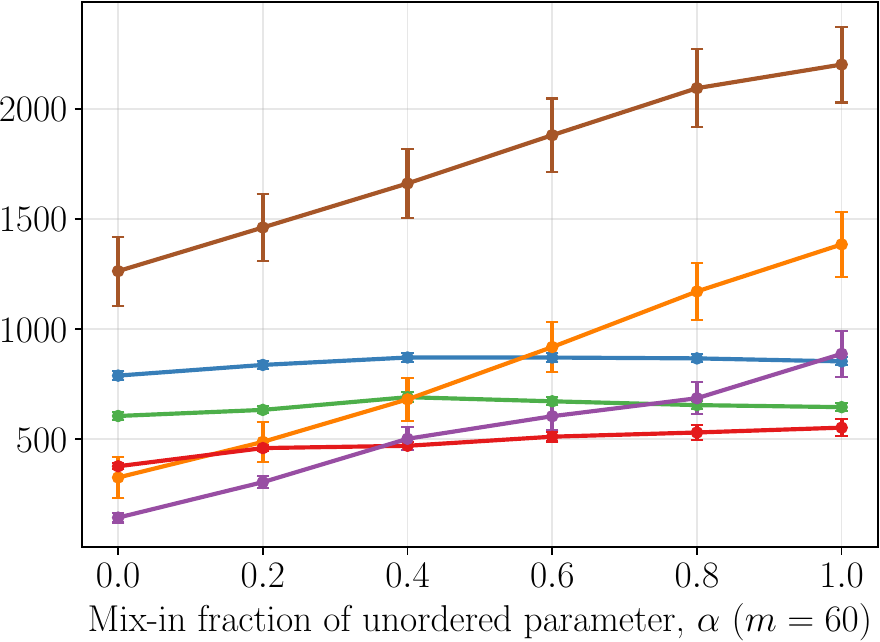}
        \caption{Regret at $T=4000$ vs misspecification factor $\alpha$ \label{fig:movie_mixin}}%
    \end{subfigure}%
    \caption{Results on \textbf{MovieLens} (A). In (a), instances are well-specified, with parameters ordered according to one of the $m=60$ states constructed by clustering the MovieLens rating data. In (b), for each instance (user) the ground-truth genre rating means $\bmu^0$ are mixed with the well-specified means $\bmu$ to form $\bmu^{\mbox{mix}} = \alpha \bmu^0 + (1-\alpha)\bmu$, to test robustness to misspecification. \label{fig:movielens}}
\end{figure*}

\begin{figure}
    \centering
    \begin{subfigure}{0.48\textwidth}
        \includegraphics[width=\textwidth]{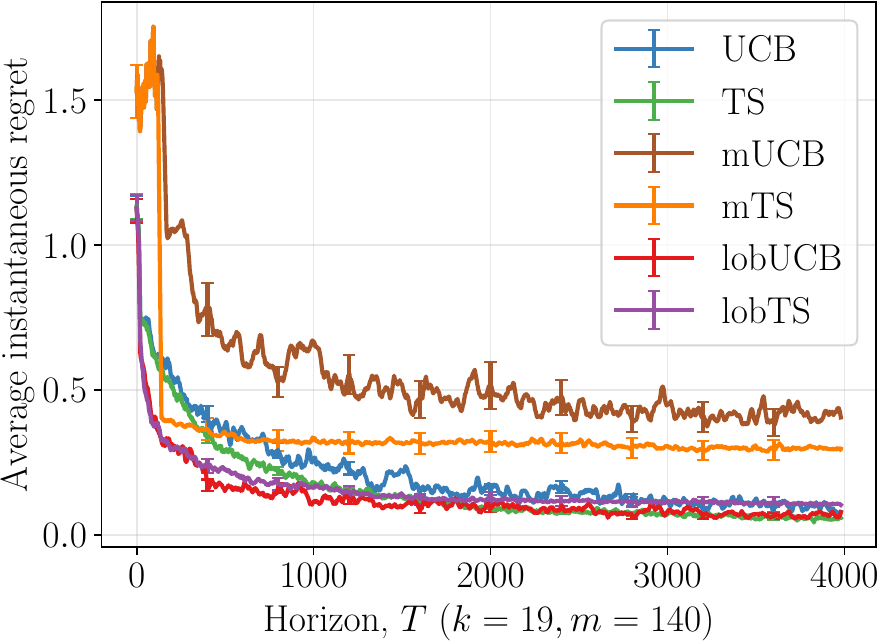}
        \caption{Instantaneous regret on \textbf{MovieLens} B (misspecified), using real-world user means $\bmu^{0}$ for rewards with $m=160$. Moving average with window size 20 for readability. \label{fig:movie_m160_sweep_ireg}}
    \end{subfigure}
    \quad
    \begin{subfigure}{0.48\textwidth}
        \includegraphics[width=\textwidth]{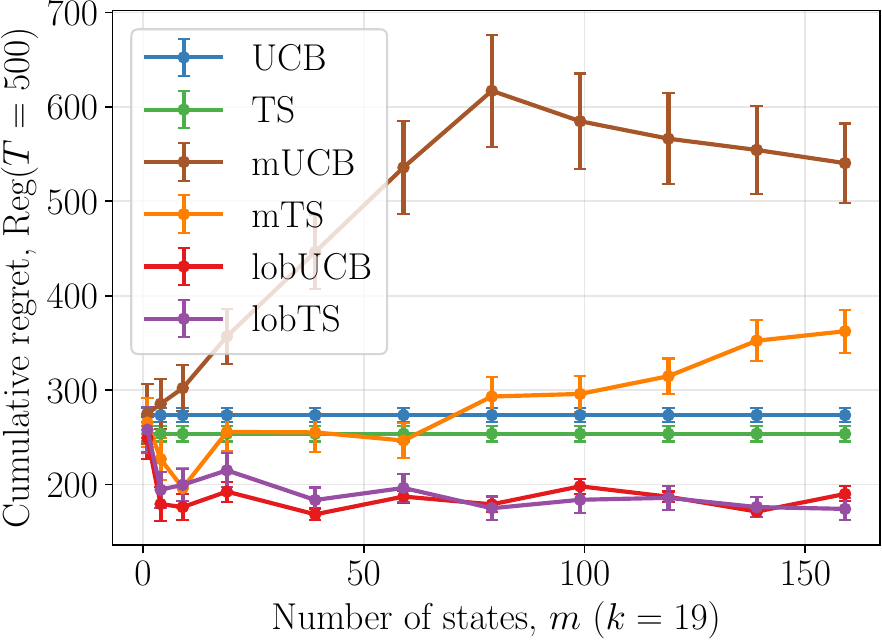}
        \caption{Cumulative regret on \textbf{MovieLens} B (misspecified) at $T=500$, using real-world user means $\bmu^{0}$ for varying $m$.\label{fig:movie_B_m_sweep} \vspace{1em}}%
    \end{subfigure}
\end{figure}

\paragraph{LOB exploits structure while same-state rewards differ.}
In Figure~\ref{fig:movie_regret}, for the well-specified Setting A with $m=60$ states, we see that latent order bandits \lobucb{} and \lobts{} both quickly converge to low instantaneous regret (small slope, see also Appendix Figure~\ref{fig:instantaneous_regret_movies_m60}), with \lobts{} comfortably outperforming all other methods. \mts{} quickly achieves low instantaneous regret but is misspecified due to individual reward means differing within the same state, and its cumulative regret grows linearly with $T$. Similar behavior is observed for $m \in \{10, 20, 80\}$ (see Appendix Figure~\ref{fig:cumulative_regret_movies}).

\paragraph{LOB algorithms are robust to misspecification.} We test the robustness of \lob{} algorithms to misspecification of the partial latent orders $O_s$ by gradually \emph{mixing} the well-specified means $\bmu^i$ of Setting A with the misspecified means $\bmu^{0,i}$ of Setting B, forming $\bmu^{\mbox{mix}, i} = \alpha\bmu^{0, i} + (1-\alpha)\bmu^{i}$ for $\alpha \in [0,1]$. The larger the $\alpha$, the more misspecification. Results are shown in Figure~\ref{fig:movie_mixin}. 
For the sequence of $\alpha \in (0, 0.2, 0.4, 0.6, 0.8, 1.0)$, the average number of violated order pairwise constraints in $\bmu^{\mbox{mix},i}$ is $(0.0, 6.8, 7.17, 7.35, 7.39, 7.39)$. Despite the rapid increase in misspecified constraints, the cumulative regret increases smoothly, with both \lobucb{} and \lobts{} improving over unstructured bandits for low-to-moderate misspecification ($\alpha \leq 0.6$), see Figure~\ref{fig:movie_mixin}. \mts{} is competitive at $\alpha=0$, despite its violated assumptions, but otherwise incurs higher regret than LOB algorithms.

In Figure~\ref{fig:movie_m160_sweep_ireg}, we see that with the misspecified means of Setting B, a small bias remains for LOB algorithms (see $T>3000$), but substantially smaller than for \mucb{} and \mts{}. However, in small-horizon settings ($T=500$), clustering users by a small number of states $m$ is sufficient for latent order bandits to outperform unstructured bandits on the real-world rating means, and remain preferable to all baselines as $m$ grows (Figure~\ref{fig:movie_B_m_sweep}). This clearly demonstrates the promise of our approach.

\paragraph{Computation time.} The biggest drawback of \lobts{} is computation time. On  \textbf{MovieLens} with $m=140$, completing $T=4000$ rounds for $N=100$ instances took 30min for \lobucb{} and 94min for \lobts{}, compared to 3min for \mts{} and 1min for \mucb{}. However, we stress that in real-world personalization tasks, like movie recommendation, the bottleneck is not the model update time, (here, $\leq$2s per round), but the time to obtain each reward (e.g., a user watching a movie).

%
%
\section{Discussion}
\label{sec:discussion}
We have proposed \emph{latent order bandits} (LOB), adding structure to multi-armed bandits through latent states, defined by partial order constraints on reward parameters. This relaxation of latent bandits allows instances of the same state to have individual reward distributions. We designed regret minimization algorithms \lobucb{} and \lobts{} and showed that the former achieves regret sublinear in the horizon, $T$, scaling in the worst case as $\cO(\sqrt{\min(k,m)T\log T})$. In experiments, both algorithms improve over unstructured bandits and, despite using less prior knowledge, are competitive with latent bandits that use a complete latent-variable model~\citep{hong2020latent}, when these are well specified. When same-state instances differ in  parameters, LOB methods perform the best.

Our setting is related to structured bandits~\citep{lattimore2014bounded,combes2017minimal} and bandits with reward constraints~\citep{pacchiano2021stochastic, Camilleri22,carlsson2024pure} or clustered arms~\citep{Pandey2007,Zhao19, carlsson2021thompson}, but also more broadly to contextual bandits~\citep{chu2011contextual, agrawal2013thompson, zhou2015survey, lattimore2020bandit}, as these also exploit structure between context, actions, and rewards, promising high personalisation. Our setting is also conceptually related to bandits with preference feedback and dueling bandits~\citep{yue2009interactively, sui2018advancements, bengs2021preference, herman2024}, a bandit class where action pairs are selected at each round, and the rewards are independent stochastic preference feedback about arm preference~\citep{sui2018advancements,ailon2014reducing,bengs2021preference}. Our setting differs in that actions are single arms (not pairs) and rewards are absolute.

\paragraph{Limitations.} Our current theoretical understanding of latent order bandits is limited to the general lower bound on the regret in \eqref{eq:lower_bound}, which decreases as order constraints on the mean parameters are added, and the upper bound for the regret of \lobucb{} in Theorem~\ref{thm:ucbregret}. This upper bound matches both classical MAB and latent bandits. It does not, however, reveal much about the structure of order constraints affect the performance of the algorithm. Both of our algorithms, \lobucb{} and \lobts{} are limited to non-contextual bandits, but if the state preference orders for each context are known, the extension is immediate. Recent works explore non-stationary latent bandits where the latent variable evolves over time~\citep{hong2020non, nelson2022linearizing, russoswitching}, which is not considered here. However, a common theme in these works is that the reward distributions are determined completely by the latent state. This differs from our setting in the sense that we assume the latent state only defines an \emph{order} of the arms.


\begin{ack}
We thank Ahmet Zahid Balc{\i}o\u{g}lu for his valuable input on the manuscript. The work was partially supported by the Wallenberg AI, Autonomous Systems and Software Program (WASP) funded by the Knut and Alice Wallenberg Foundation.

The computations and data handling were enabled by resources provided by the National Academic Infrastructure for Supercomputing in Sweden (NAISS), partially funded by the Swedish Research Council through grant agreement no. 2022-06725.
\end{ack}

{\small
\bibliographystyle{plainnat}
\bibliography{main}
}

%
%
\clearpage
\appendix
\onecolumn

\section*{Notation}
A list of common notations is given in Table~\ref{tab:notation}. Generally, capital Roman letters denote random variables and lower-case Roman letters denote constants or observations or random variables. The sequence $O_s$ is an exception since it is not random. Caligraphic Roman letters denote sets. 
\begin{table}[H]
    \centering
    \caption{Common notation}
    \label{tab:notation}   
    \vspace{0.5em}
    \begin{tabular}{ll}
    \toprule
        $k$ & Number of arms \\
        $m$ & Number of latent states \\
        $t, t'$ & Time indices \\
        $T$ & Time horizon \\
        $\cA$ & Set of available actions \\
        $a$ & A single action in $\cA$. $a_t$ is the action at time $t$\\
        $N_a(t)$ & Number of plays of arm $a$ at time $t$ \\
        $\cS$ & Set of latent states \\
        $S$ & Latent variable on $\cS$ \\
        $s$ & A single latent state in $\cS$ \\
        $R_a$ & Stochastic reward for action $a$ \\
        $r_t$ & Observation of reward at time $t$ following action $a_t$ \\
        $\mu_a$ & Expected reward under action $a$, $\mu_a = \E[R_a]$ \\
        $\sigma_a$ & Standard deviation of reward under action $a$, $\sigma^2_a = \V[R_a]$ \\
        $\bmu$ & Vector of reward means for all actions \\
        $a^*$ & Action with the highest expected reward  \\
        $\mu^*$ & Optimal reward, reward of optimal action \\
        $\mu_{s,a}$ & Expected reward under action $a$ in latent state $s$ \\
        $\sigma_a$ & Standard deviation of reward under action $a$ in latent state $s$\\
        $\bmu_s$ & Vector of reward means in latent state $s$ for all actions \\
        $\cH_s$ & Set of valid reward means in latent state $s$, $\bmu_s \in \cH_s$ \\
        $\cH$ & Set of globally valid reward means, $\bmu \in \cH = \cup_{s \in \cS} \cH_s$\\
        $a_s^*$ & Action with the highest expected reward in latent state $s$ \\
        $\mu_s^*$ & Optimal reward for any action in latent state $s$ \\
        $\hmu_a, \hbmu, \hmu_{s,a}, \hbmu_s$ & Estimates of reward means corresponding to the above \\
        $\tmu_a, \tbmu, \tmu_{s,a}, \tbmu_s$ & Projected empirical means corresponding to the above \\
        $\pi$ & Decision making policy to select $a_t$ \\
        $\reg(T)$ & Cumulative regret until horizon $T$ \\
        $\cM$ & Latent-variable model \\
        $O_s$ & Partial order of actions for latent state $s$ \\
        $\rho$ & Separation parameter for preference probability \\
        $\Delta$ & Separation parameter for reward means \\
        $\tilde{R}_t$ & Preference feedback at time $t$ \\
        $\cD_T$ & History of observations up to time $T$ \\
        $w_a$ & Weight parameter in isotonic regression \\
    \bottomrule
    \end{tabular}
\end{table}

\section{Proof of the regret bound for LOB-UCB}
\label{app:proof_lob_ucb}

\begin{reptheorem}{thm:ucbregret}
    Consider an LOB instance $(s^*, \bmu^*)$ with $k$ arms and $m$ states. Suppose that each possible state $s$ has a partial order $O_s$ and unique optimal arm $a^*_s$. Let $q=|\cA_0| \leq \min(k,m)$. Assume rewards are sub-Gaussian with variance bounded by $\sigma^2$ and define $\Delta_a = \mu^*_{a^*} - \mu^*_a$ with $\Delta_a > 0$ for $a \neq a^*_{s^*}$. Then, Algorithm~\ref{alg:lobucb} using $c=\sigma$ has expected instance-dependent regret
    $$
    R_T \leq \cO(\sum_{a\in \cA_0} \frac{\sigma^2 \log T}{\Delta_a}+q\Delta_{\mathrm{max}}
    )
    $$  
    and worst-case instance-independent regret,
    $$
    R_T \leq \cO(\sigma \sqrt{\min(k,m)T\log T}).
    $$  
\end{reptheorem}
\begin{proof}
To prove the result, we will consider two complementary events, one of which occurs at every round $t$: i) The good event $\cG$, on which the true state $s^* \in \cS_t$, and ii) the bad event $\cG^c$, where $s^*$ may be discarded. On the good event, \lobucb{} performs standard UCB selection over $|\cA_t| \leq q$ arms, and benefits from the same analysis as standard UCB. Bad events each contribute $\Delta_{\mathrm{max}} = \max_{a\neq a^*} \Delta_a$ regret in the worst case, but happen rarely.

\paragraph{On the good event, $\cG$} Recall that $\cS_t$ is defined as the set of states for which feasible parameters intersect with the empirical confidence rectangle, 
$$
\cS_t = \{s : \cH_s \cap \prod_{a=1}^k [L_a(t), U_a(t)]\}~.
$$
Define the good event as the true mean parameter $\bmu^*$ being contained in the empirical confidence intervals $\cG = \forall a : L_a(t) \leq \mu_a^* \leq U_a(t)$. It follows that the true state $s^*$ is contained in $\cS_t$ since $\bmu^* \in \cH_s$ \emph{and} $\bmu^* \in [L(t), U(t)]$. Furthermore, since $\bmu^*$ is consistent with the partial order $O_{s^*}$, it follows that the optimal arm $a^* = \argmax_a \bmu_a^*$ is contained in $\mbox{Max}(s^*)$ and therefore in $\cA_t$. That is, on the good event $\cG$, $a^*$ is considered for play. The regret contributed by $\cG$ is thus determined by (i) how often \emph{other} arms are played on $\cG$, as in standard UCB, and (ii) how often $\cG$ occurs. 

\paragraph{How often does $\cG^c$ occur?} Since regrets are assumed sub-Gaussian, deviations are controlled by standard concentration inequalities. For all $a$, 
$$
\Pr[|\hmu_a - \mu^*_a| > B_a(t)] \leq 2 \exp\left( -\frac{N_a(t)B(t)^2}{2\sigma^2}\right) = 2\exp\left( -\frac{4 c^2 N_a(t)}{2\sigma^2 N_a(t)}\log t \right) \leq 2 t^{-2c^2/\sigma^2}.
$$
Thus, for $c\geq \sigma$, by a union bound over the $q$ arms in $\cA_0$, the expected number of bad events is
$$
\sum_{t=1}^T \sum_{a \in \cA_0} \Pr[|\hmu_a - \mu^*_a| > B_a(t)] \leq 2q \sum_{t=1}^T t^{-2} = q\frac{\pi^2}{3} \in O(q)~,
$$
with $q = |\cA_0|$. On any bad event, the maximum regret is $\Delta_{\mathrm{max}}$.

\paragraph{How much regret to suboptimal arms contribute on $\cG$?}
We bound the number of plays of suboptimal arms $a$ on $\cG_t$. For $a$ to be played, it must be that $U_a(t) \geq U_{a^*}(t)$. By definition, on the good event $\cG_t$, with $c=\sigma$
\begin{align*}
U_{a^*}(t) & = \hmu_{a^*}(t) + c\sqrt{\frac{4\log t}{N_{a^*}(t)}} \\
& \geq \mu^*_{a^*} - \sigma \sqrt{\frac{4\log t}{N_{a^*}(t)}} + c\sqrt{\frac{4\log t}{N_{a^*}(t)}} \\
& = \mu^*_a + \Delta_a \\
& \geq \hmu_a(t) - \sigma\sqrt{\frac{4\log t}{N_{a}(t)}} + \Delta_a \\
& \geq U_a(t) - 2\sigma\sqrt{\frac{4\log t}{N_{a}(t)}} + \Delta_a
\end{align*}
Thus, if $\Delta_a > 2 \sigma\sqrt{\frac{4\log t}{N_{a}(t)}}$, $U_{a^*}(t) > U_a(t)$. This is satisfied whenever $N_{a}(t) > \frac{16\sigma^2}{\Delta_a^2}\log t$. Thus,
\begin{align*}
R_T(\cG) & = \sum_{t=1}^T \mathds{1}[\cG_t]\Delta_{a_t} \\
& = \sum_{t=1}^T \mathds{1}[\cG_t] \sum_{a \in \cA_t} \mathds{1}[a_t=a]\Delta_a \\
& \leq \sum_{a \in \cA_0} N_a(T) \Delta_a \\
& \leq \sum_{a \in \cA_0}\frac{16\sigma^2}{\Delta_a}\log T~.
\end{align*}

\paragraph{Overall regret}
In total, the regret of the algorithm is bounded by
$$
R_T = R_T(\cG) + R_T(\cG^c) \leq \cO(\sum_{a \in \cA_0}\frac{\sigma^2}{\Delta_a}\log T + q \Delta_{\mathrm{max}})~.
$$
Through the standard trick of splitting arms at gap $\epsilon$ and optimizing, we get the gap-independent bound
$$
R_T \leq \cO(\sigma \sqrt{q T \log T}) \leq \cO( \sigma \sqrt{\min(k,m) T \log T})~.
$$

\end{proof}

\section{Experimental details and additional empirical results}
\label{app:experiments}

\begin{figure}[t]
    \centering
    \begin{minipage}[t]{0.48\textwidth}
        \begin{algorithm}[H]
        \caption{Classicals UCB bandit (\ucb{})}
        \label{alg:ucb}
        \begin{algorithmic}[1]
        \State \textbf{Input:} Confidence parameter $c$ 
        \vspace{0.5em}
        \State $\forall a : N_a = 0$, $\hmu_a = 0$
        \For{$t=1, ..., T$}
            \State $I_a \gets \hmu_{a} + \sqrt{\frac{2c^2\log t}{N_{a}}}$ 
            \State Choose action $a_t \gets \argmax_a I_a$
            \State Play arm $a_t$, and observe $r_t$
            \State {\small $\hmu_{a_t} \gets (N_{a_t}\hmu_{a_t} + r_t)/(N_{a_t}+1)$}
            \State $N_{a_t} \gets N_{a_t} + 1$
        \EndFor
        \end{algorithmic}
        \end{algorithm}
    \end{minipage}
    \hfill
    \begin{minipage}[t]{0.50\textwidth}
        \begin{algorithm}[H]
        \centering
        \caption{Gaussian-Gaussian TS bandit  (\ts{})}\label{alg:ts}%
        \begin{algorithmic}[1]
            \State \textbf{Input:} Number of arms $k$, prior means $\mu_a^0$, prior variances $\sigma_a^{2,0}$, confidence parameter $c$
            \vspace{0.5em}
            \For{$a = 1, \dots, k$}
                \State Initialize posterior: $\mu_a \gets \mu_a^0$, $\sigma_a^2 \gets \sigma_a^{2,0}$
            \EndFor
            \For{$t = 1,  ..., T$}
                \For{$a = 1, ..., K$}
                    \State Sample $\tilde{\mu}_a \sim \mathcal{N}(\hmu_a, \sigma_a^2)$
                \EndFor
                \State Choose action $a_t \gets \arg\max_a \tilde{\mu}_a$
                \State Play arm $a_t$, and observe $r_t$
                \State Update posterior for arm $a_t$:
                \State \hspace{1em} $\sigma_{a_t}^2 \gets \left( \frac{1}{\sigma_{a_t}^2} + \frac{1}{\sigma^2} \right)^{-1}$
                \State \hspace{1em} $\hmu_{a_t} \gets \sigma_{a_t}^2 \left( \frac{\hmu_{a_t}}{\sigma_{a_t}^2} + \frac{r_t}{c^2} \right)$
            \EndFor
        \end{algorithmic}
    \end{algorithm}
    \end{minipage}
\end{figure}

\subsection{Details on methods and parameters}
For \mts{} and \mucb{}, we reimplement the algorithms as presented in \citet{hong2020latent}. For \ucb{} and \ts{}, we use implementations of classical Algorithms~\ref{alg:ucb} and \ref{alg:ts}, respectively.

In all experiments, all methods use well-specified confidence parameters $c$, representing the true standard deviation of the rewards, $\sigma$. For the \textbf{synthetic environment}, we use $\sigma=5$, and for \textbf{MovieLens}, we use $\sigma=2$. The \emph{MovieLens} environment has $k=19$ arms, and the synthetic environment varies in this number.

For \lobucb{} and \lobts{}, for all experiments on the \textbf{synthetic environment}, we use $T_{proj}=1$, i.e., we project empirical means every round. For \textbf{MovieLens}, we use $T_{proj}=10$.

\subsection{Details on the MovieLens experiments}
\label{app:movielens}
We create the MovieLens environment based on the 20M version of the rating data sets~\citep{harper2015movielens}, downloaded from Kaggle, \url{https://www.kaggle.com/datasets/grouplens/movielens-20m-dataset}. Recall that instances represent users, actions $a_t$ represent choices of movie \emph{genre} ($k=19$), rewards $r_t$ represent ratings of a movie within the selected genre, and latent states $s$ represent groups of users with similar movie preferences. The custom license used for the MovieLens data can be found at \url{https://grouplens.org/datasets/movielens/}.

For each dataset, we first filter out movies that have less than 200 ratings, and users who have rated less than 200 movies. This leaves 26,826 users 6,236 movies, and 11,905,303 ratings. Each movie is assigned 0 or more genres, with most having multiple assigned. Movies with no genres are assigned the genre ``None'', and movies with more than one genre is assigned a random one from the ones assigned to the specific movie. Based on these, for each user $i$ in the data set, we compute their individual average 5-star rating $\mu^{0,i}_a$ of all movies they have rated within each genre $a$. 

The user-specific genre means are clustered into latent states using k-means clustering with Kendall's Tau as distance and $m$ centroids. The value of $m$ is varied between experiments. Each resulting state $s$ has a total order $O_s$ of the arms corresponding to the centroid's mean parameters. 

To create a well-specified setting for LOB, each user's means $\bmu^{0,i}$ are then projected to comply with the order of their assigned cluster $s_i$, based on Euclidean distance to $\cH_{s_i}$, forming parameters $\bmu^i$. In this setting, we also add a minimum gap $\gamma=0.01$ between the means of any two arms (genres). Rewards for an instance $i$ are generated as $r^i_t \sim \cN(\mu_{a_t}^i, \sigma^2)$ with $\sigma=2$. For the UCB indices and posterior state probabilities of \mucb{} and \mts{}, respectively, we use the state centroids' mean rewards, $\bar{\bmu}_s = \sum_{i : s_i = s} \bmu_i$.

For the bandit evaluation, we take the first $N=100$ random users from the data set and perform $T$ rounds of choosing a movie genre with a random reward (rating) per user. 

\subsection{Projecting means less than every round}
In Figure~\ref{fig:pevery}, we show results on the synthetic environent for how regret scales when the projection steps of \lobts{} are performed only every $T_{proj}$ rounds, for $k=m=10$. We see minimal sensitivity of the regret to this simplification, while it speeds up computation linearly in $T_{proj}$.

\begin{figure}[ht]
    \centering
    \includegraphics[width=0.5\linewidth]{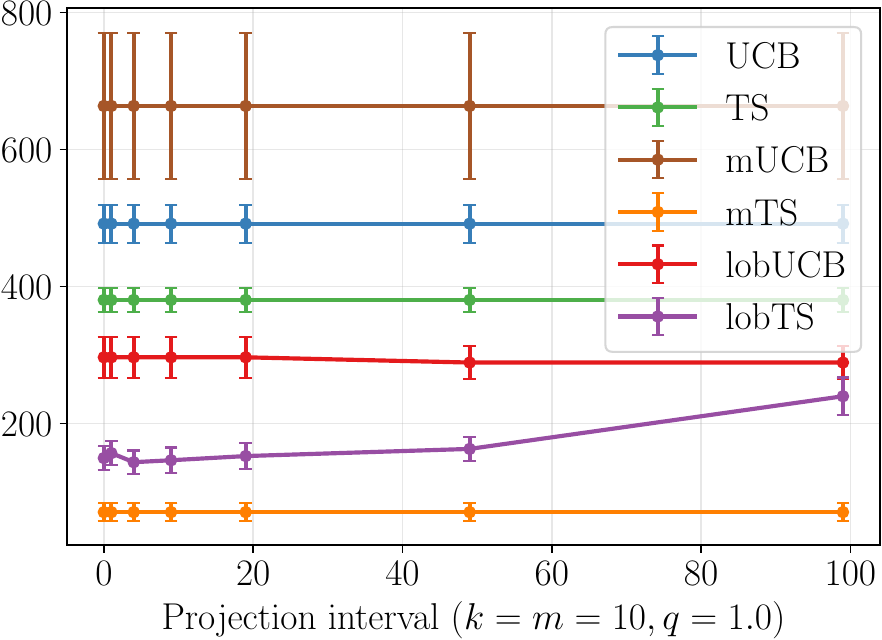}
    \caption{Results on \textbf{Synthetic environment} with $k=m=10$ where \lobts{} only projects every $T_{proj}$ rounds, as a function of $T_{proj}$. Other algorithms are unaffected by this parameter.}
    \label{fig:pevery}
\end{figure}

\subsection{Additional results on the Synthetic environment}

In Figure~\ref{fig:cumulative_regret_synthetic}, we show the cumulative regret on the \textbf{Synthetic} environment for $k=15$ and different settings of $m$.

\begin{figure}[ht]
    \centering
    \begin{subfigure}{0.31\textwidth}
        \centering
        \includegraphics[width=\linewidth]{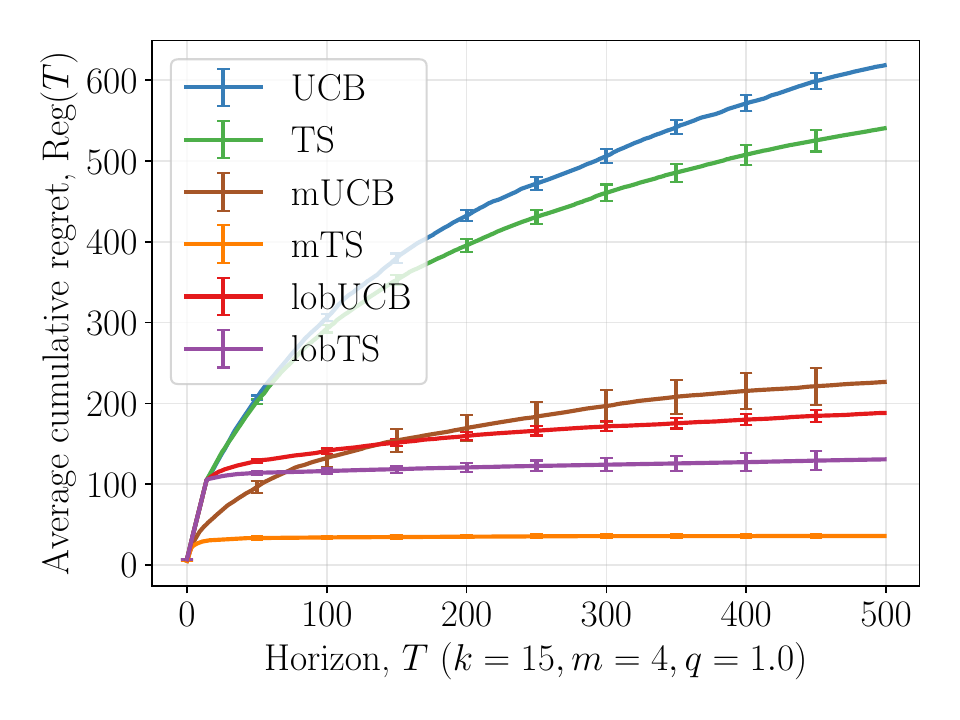}    
        \caption{$m=4$}
    \end{subfigure}
    \begin{subfigure}{0.31\textwidth}
        \centering
        \includegraphics[width=\linewidth]{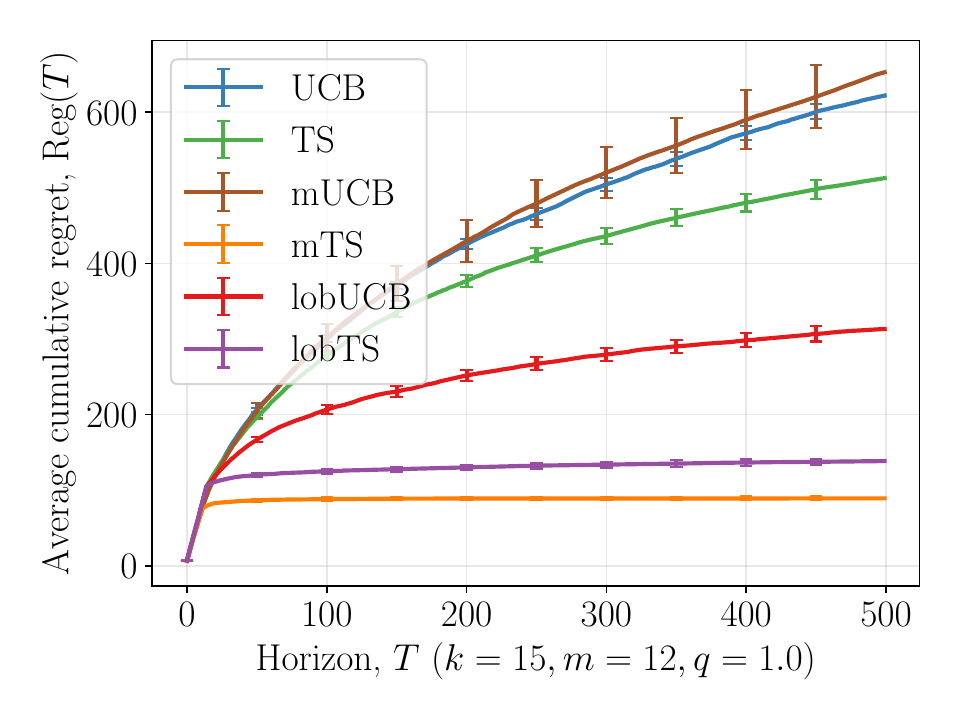}    
        \caption{$m=12$}
    \end{subfigure}
    \begin{subfigure}{0.31\textwidth}
        \centering
        \includegraphics[width=\linewidth]{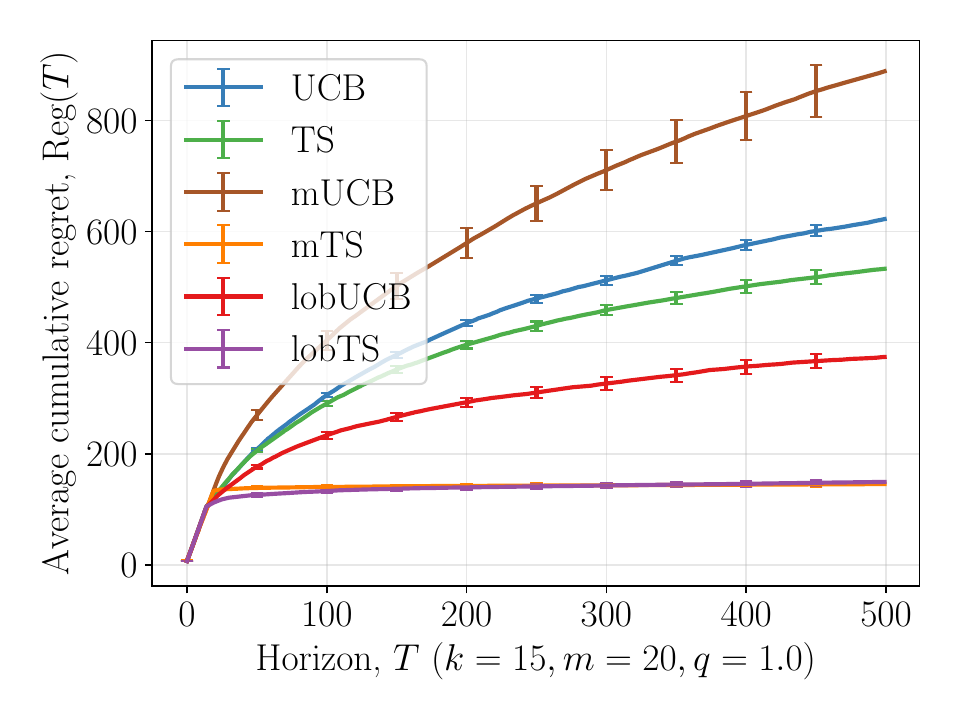}    
        \caption{$m=20$}
    \end{subfigure}    
    \caption{Cumulative regret for the Synthetic environment with $k=15$ and varying $m$.}
    \label{fig:cumulative_regret_synthetic}
\end{figure}

\subsection{Additional results on the MovieLens environment}

In Figure~\ref{fig:instantaneous_regret_movies_m60} we show the instantaneous regret on the \textbf{MovieLens} data set for the well-specified setting, where ground-truth means are projected to follow the order prescribed by their cluster centroid. 
\begin{figure}[ht]
    \centering
    \includegraphics[width=0.5\linewidth]{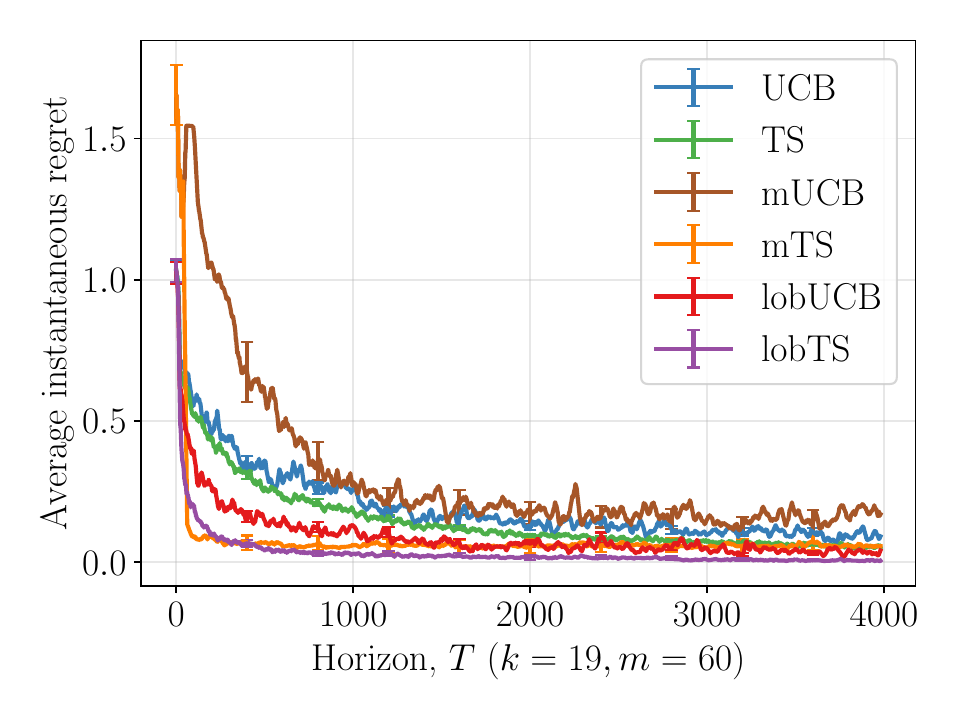}
    \caption{Instantaneous regret for the well-specified setting of the MovieLens environment with $m=60$. Moving average with window size 20 for readability.}
    \label{fig:instantaneous_regret_movies_m60}
\end{figure}

In Figure~\ref{fig:cumulative_regret_synthetic}, we show the cumulative regret on the \textbf{MovieLens} data set for different settings of $m$ in the well-specified Setting A. 

\begin{figure}[ht]
    \centering
    \begin{subfigure}{0.31\textwidth}
        \centering
        \includegraphics[width=\linewidth]{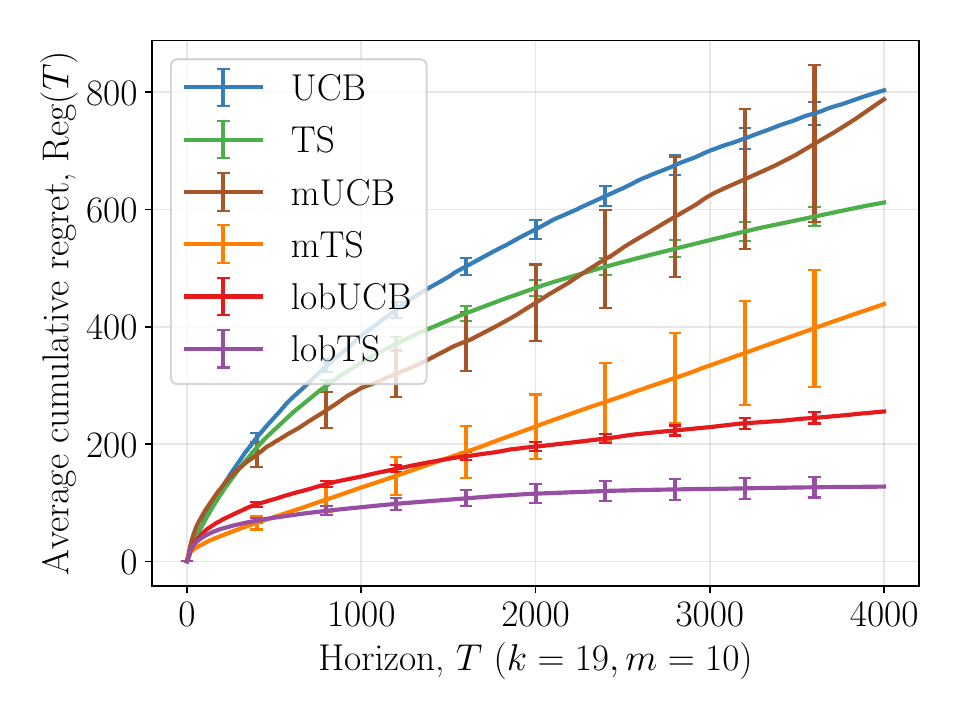}    
        \caption{$m=10$}
    \end{subfigure}
    \begin{subfigure}{0.31\textwidth}
        \centering
        \includegraphics[width=\linewidth]{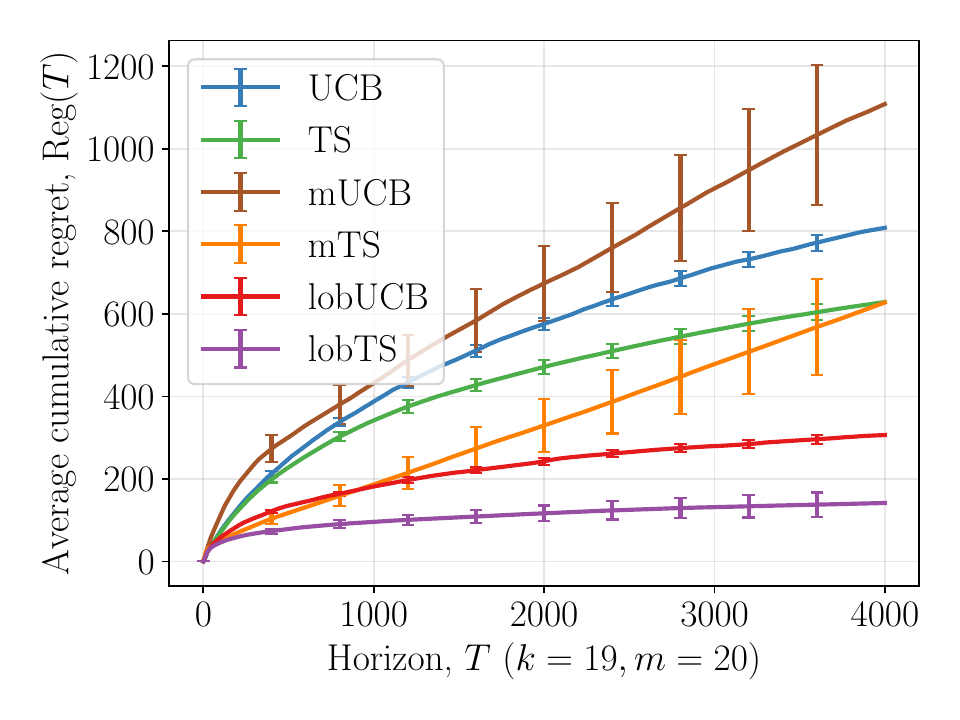}    
        \caption{$m=20$}
    \end{subfigure}
    \begin{subfigure}{0.31\textwidth}
        \centering
        \includegraphics[width=\linewidth]{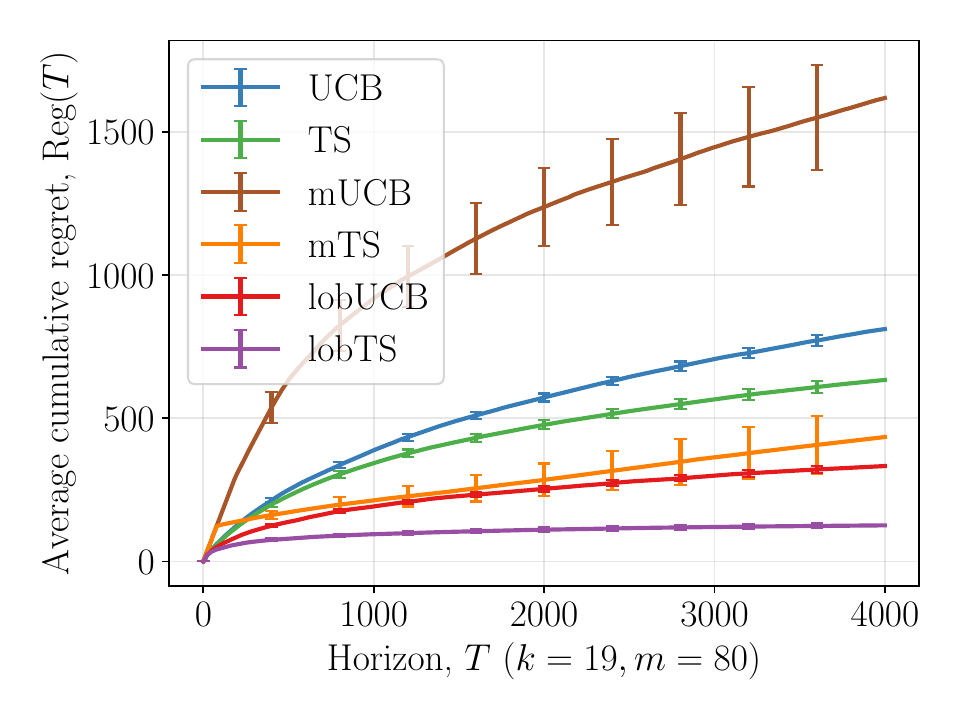}    
        \caption{$m=80$}
    \end{subfigure}    
    \caption{Cumulative regret for the well-specified Setting A of the MovieLens environment with varying $m$.}
    \label{fig:cumulative_regret_movies}
\end{figure}

In Figure~\ref{fig:instantaneous_regret_movies_real}, we show the instantaneous regret on the \textbf{MovieLens} data set for different settings of $m$. When $m$ grows, the degree of misspecification for the LOB algorithms decreases, and the regret bias does as well. For long horizons, $M>2000$, even at $m=160$, there is still remaining bias, and unstructured algorithms overtake LOB algorithms in the long run. However, for shorter horizons, feasible to implement in real-world personalization tasks, the structured algorithms are preferable even in the misspecified case. 

\begin{figure}[ht]
    \centering
    \begin{subfigure}{0.31\textwidth}
        \centering
        \includegraphics[width=\linewidth]{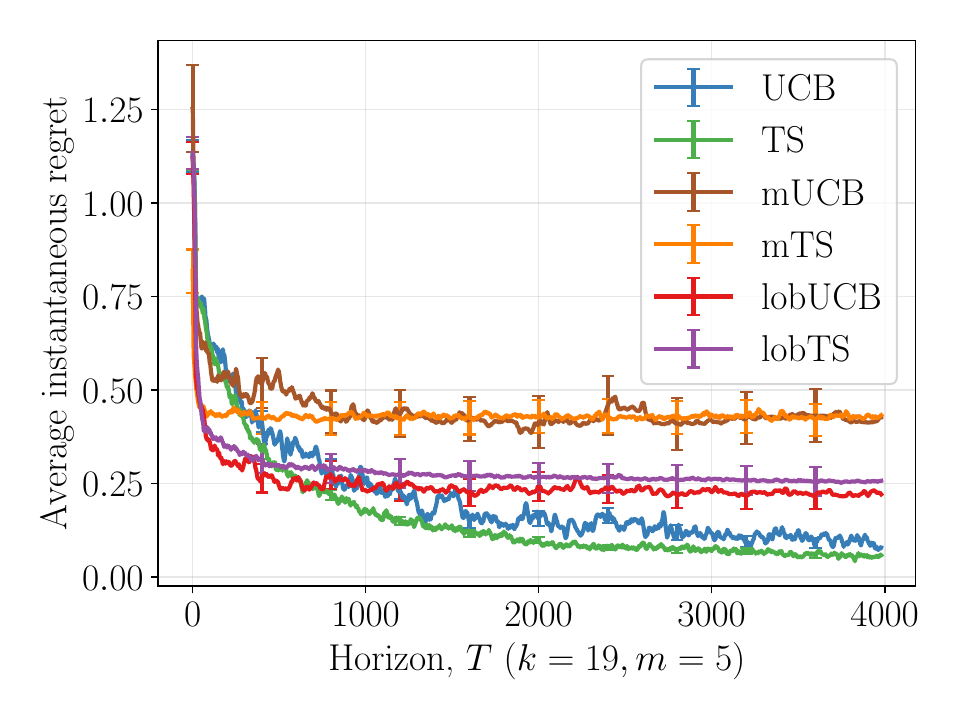}    
        \caption{$m=5$}
    \end{subfigure}
    \begin{subfigure}{0.31\textwidth}
        \centering
        \includegraphics[width=\linewidth]{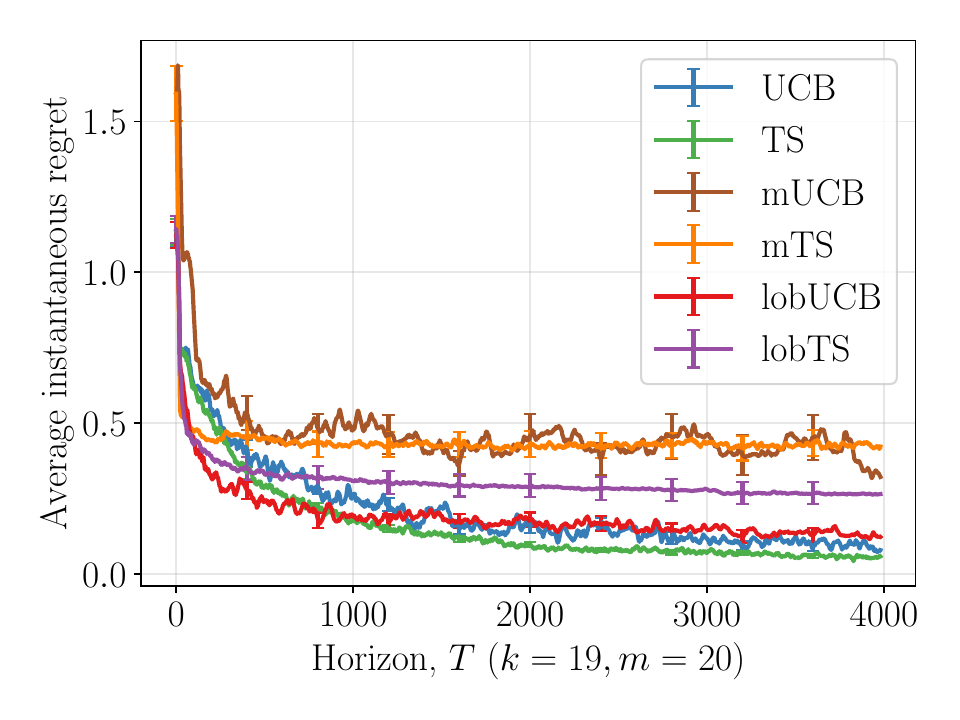}    
        \caption{$m=20$}
    \end{subfigure}
    \begin{subfigure}{0.31\textwidth}
        \centering
        \includegraphics[width=\linewidth]{fig/instantaneous_regret_movies_real_m140_upd.pdf}    
        \caption{$m=140$}
    \end{subfigure}    
    \caption{Instantaneous regret for the misspecified (real-world means) setting of the MovieLens environment with varying $m$.}
    \label{fig:instantaneous_regret_movies_real}
\end{figure}

\subsection{Computational infrastructure and code}
The synthetic simulations were run on a 2023 Apple M2 Pro laptop with 16 GB RAM. Code to reproduce the experiments will be made available upon publication of the paper.

\section{Argument that restricting the arms of UCB achieves a worst-case regret comparable to mTS and mUCB}
\label{app:topm}

\begin{thmprop}\label{prop:regret_upper}Consider the following algorithm for LB or LOB problems where the optimal arm is known for each state. 
Whenever $k>m$, restrict the action set to the subset $\cA_\cS^*$ of $u \leq m$ arms that are optimal in at least one latent state, $\cA_\cS^* = \{a\in [k] : \exists s\in [m]: a_s^* = a\}$, and run a standard MAB algorithm restricted to $\cA_\cS^*$. When $k\leq m$, run a standard MAB algorithm on $\cA = [k]$. This procedure achieves $O(\sqrt{\min(k, m)T})$ regret in the worst case on the latent bandit and latent order bandit problems whenever the latent order of each state implies the best arm.
\end{thmprop}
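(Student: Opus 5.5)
The plan is to reduce the claim to the standard minimax regret guarantee of an unstructured MAB algorithm on a restricted arm set. The only problem-specific fact needed is that restricting to $\cA_\cS^*$ never removes the true optimal arm. For the base algorithm we will use a minimax-optimal MAB algorithm such as MOSS, which guarantees worst-case regret $O(\sqrt{uT})$ on any $u$-armed sub-Gaussian bandit. Using UCB instead would give $O(\sqrt{uT\log T})$, which also suffices up to logarithmic factors.

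\emph{Step 1 (the true optimal arm survives the restriction).} Fix an instance $(s^*,\bmu)$. In the LB case, $\bmu=\bmu_{s^*}$ is known up to the state. In the LOB case, $\bmu\in\cH_{s^*}$, and by assumption the order $O_{s^*}$ implies a unique best arm $a^*_{s^*}$: there is an arm $a$ with $a\prec_{s^*} a'$ for all $a'\neq a$. Then every $\bmu\in\cH_{s^*}$ satisfies $\argmax_{a'}\mu_{a'}=a^*_{s^*}$, so the optimal arm of the instance equals $a^*_{s^*}\in\cA_\cS^*$. Hence the optimal reward $\mu^*$ of the instance equals the optimal reward of the restricted bandit on $\cA_\cS^*$. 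The regret in \eqref{eq:regret_min} is therefore exactly the regret of the MAB algorithm run on the $u$-armed sub-bandit $(P_a)_{a\in\cA_\cS^*}$. The arms in $\cA_\cS^*$ keep their sub-Gaussian reward distributions, so the sub-bandit is a valid input for the base algorithm.

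\emph{Step 2 (counting arms).} Each state contributes exactly one optimal arm, so $u=|\cA_\cS^*|\le m$, and trivially $u\le k$. When $k>m$, the algorithm runs on $u\le m=\min(k,m)$ arms. When $k\le m$, it runs on all $k=\min(k,m)$ arms, and Step 1 is not needed because no arm is removed.

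\emph{Step 3 (conclusion).} Apply the worst-case bound of the base algorithm to the restricted bandit. This gives $\reg(T)\le C\sqrt{uT}\le C\sqrt{\min(k,m)T}$ uniformly over all LB and LOB instances satisfying the hypothesis. That is the claimed $O(\sqrt{\min(k,m)T})$ worst-case regret; the $\Theta$ statement in the main text follows from the matching minimax lower bound on $\min(k,m)$ arms.

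The main subtlety is Step 1 in the LOB case. One must use the hypothesis that the partial order implies the best arm, not merely that $\mbox{Max}(s)$ is nonempty. Otherwise the true optimal arm could lie outside $\cA_\cS^*$, and the restricted algorithm would suffer linear regret. A secondary point is choosing a base algorithm whose worst-case bound has no $\log T$ factor (MOSS rather than UCB), if the statement is to be read without logarithmic factors.
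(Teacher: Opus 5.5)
Your proposal is correct and follows the paper's argument. The restricted arm set has at most $\min(k,m)$ arms and always contains the true optimal arm, so a standard worst-case MAB bound applies to it directly; you simply spell out what the paper leaves implicit: the order hypothesis is what keeps the optimal arm inside the restricted set in the LOB case, and a MOSS-type base algorithm removes the $\log T$ factor (up to the usual additive $\sum_a \Delta_a$ term, which you, like the paper, implicitly treat as bounded).
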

\begin{proof}
    The procedure uses $\min(|\cA|, |\cA_\cS^*|) \leq \min(k, m)$ arms, and the optimal arm for any latent bandit or LOB instance is contained in either action set. We can directly apply standard regret bounds for MAB algorithms (see e.g.,~\citet{lattimore2020bandit}) to the restricted action sets.
\end{proof}

Notably, the procedure in Proposition~\ref{prop:regret_upper} does not use knowledge of the set of possible reward \emph{means}, only the set of possible best arms. On its face, it may seem that the \mts{} algorithm of \citet{hong2020latent} is matched by this simple algorithm, and improved upon when $m>k$. However, as we see empirically in Section~\ref{sec:experiments}, this is far from true: \mts{} achieves substantially better performance by exploiting reward structure, even though this is not yet explained by theory. However, this is not achievable in the LOB setting since the mean parameters must be estimated during exploration.

\section{Proof that isotonic regression solves the constrained MLE problem}

We prove that a weighted isotonic regression problem can solve the parameter projections in Algorithms~\ref{alg:lobucb}--\ref{alg:lobts} for total orders.
\label{app:isotonic}
\begin{thmprop}\label{prop:isotonic}
    Let $n_a = \sum_{t=1}^T\mathds{1}[a_t = a]$ and define $\bar{w}_a = \frac{n_a}{\sigma^2}$. Next, let $\cI = (i_1, ..., i_k)$ be the rank of each arm corresponding to the total arm order $O_s$ of latent state $s$. Then, the solution to the isotonic regression problem with outcomes $y_a = \frac{1}{n_a}\sum_{t : a_t=a}r_t$ and sample weights $\bar{w}_a$ 
    \begin{equation*}
    \begin{aligned}
    & \underset{\bmu \in \bbR^d}{\text{minimize}}
    & & \sum_{a=1}^k \bar{w}_a (\mu_a - y_a)^2 
    & \text{subject to}
    & & \mu_{i_k} \leq \mu_{i_{k-1}} \leq ... \leq \mu_{i_1}
    \end{aligned}
    \end{equation*}
    solves the constrained MLE problem in Algorithm~\ref{alg:lobucb}, ln.5.
\end{thmprop}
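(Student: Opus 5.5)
The plan is to write down the constrained maximum-likelihood problem explicitly under the Gaussian reward model and show that its objective equals, up to an additive constant and a positive multiplicative factor, the weighted isotonic regression objective. Since the feasible sets coincide up to closure, the two problems then have the same minimizers. The constrained MLE for state $s$ is
\[
\max_{\bmu \in \cH_s} \prod_{t=1}^T p(r_t \mid a_t, \mu_{a_t}),
\qquad
p(r \mid a, \mu_a) = \tfrac{1}{\sqrt{2\pi\sigma^2}} \exp\bigl(-(r-\mu_a)^2/(2\sigma^2)\bigr),
\]
which I will rewrite as minimizing the negative log-likelihood $\frac{1}{2\sigma^2}\sum_{t}(r_t - \mu_{a_t})^2$ plus a constant independent of $\bmu$.

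The first step groups the rounds by arm. For each $a$ with $n_a>0$, the standard bias–variance decomposition around the empirical mean gives
\[
\sum_{t : a_t = a} (r_t - \mu_a)^2 = n_a(\mu_a - y_a)^2 + \sum_{t : a_t=a}(r_t - y_a)^2 .
\]
The cross term vanishes because $\sum_{t:a_t=a}(r_t - y_a)=0$. The second sum does not depend on $\bmu$, so the negative log-likelihood equals $\frac12\sum_a \bar w_a(\mu_a-y_a)^2$ plus a constant, with $\bar w_a = n_a/\sigma^2$. Arms with $n_a=0$ get zero weight and contribute nothing, which matches the convention that the weighted objective ignores them; their value is then determined only by the constraints.

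The second step identifies the feasible sets. A total order $O_s$ with ranking $\cI=(i_1,\dots,i_k)$ defines, by transitivity, the strict chain $\mu_{i_1}>\dots>\mu_{i_k}$. So $\cH_s$ is exactly the relative interior of the cone $\{\mu_{i_k}\le\dots\le\mu_{i_1}\}$ used in the isotonic problem, and the latter is the closure of $\cH_s$.

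The main obstacle is the strict versus non-strict inequality: the supremum over the open set $\cH_s$ may not be attained. I will handle this by interpreting the MLE (as the algorithms do, via $\sup$ in \eqref{eq:likelihood_ub}) as the supremum. By continuity of the objective, the supremum over $\cH_s$ equals the minimum over its closure. The isotonic solution exists and is unique when all $\bar w_a>0$, since the problem is a strictly convex quadratic over a closed convex set, and it achieves this value. Hence the isotonic regression solution is the constrained MLE, understood as the limit or closure solution. For the projection onto $M_a\cap\cH_s$ used in \lobts{}, the same argument applies because the order already fixes the top arm.
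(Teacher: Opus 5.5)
Your proposal is correct and follows essentially the same route as the paper. Both arguments show that the weighted isotonic objective equals the Gaussian negative log-likelihood $\frac{1}{2\sigma^2}\sum_t (r_t-\mu_{a_t})^2$ up to a positive factor and an additive constant; your per-arm identity $\sum_{t:a_t=a}(r_t-\mu_a)^2 = n_a(\mu_a-y_a)^2 + \sum_{t:a_t=a}(r_t-y_a)^2$ is a cleaner form of the paper's expand-the-square computation. You also handle two points the paper's proof skips: the strict inequalities defining $\cH_s$, which you resolve by reading the MLE as a supremum attained on the closure (isotonic solutions typically contain ties), and arms with $n_a=0$.
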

\begin{proof}
    We show that the isotonic regression objective is equal to the problem on ln.5. of Algorithm~\ref{alg:lobucb} up to a constant. We have 
    \begin{align*}
        \sum_{a=1}^k \bar{w}_a (\mu_a - y_a)^2 & = \sum_{a=1}^k \bar{w}_a (\mu_a^2 - 2y_a\mu_a + y_a^2) 
         = \sum_{a=1}^k \bar{w}_a (\frac{1}{n_a}\sum_{t:a_t=a}[\mu_a^2 - 2r_t\mu_a] + y_a^2) \\
        & = \sum_{a=1}^k \frac{1}{\sigma_a^2} (\sum_{t:a_t=a}[\mu_a^2 - 2r_t\mu_a + r_t^2] + y_a^2 - \sum_{t:a_t=a}\frac{r_t^2}{n_a}) 
         = \sum_{t=1}^T \frac{(\mu_{a_t}^2 - r_t^2)}{\sigma_{a_t}^2} + C~,
    \end{align*}
    where $C$ is a constant w.r.t. $\bmu$. Thus minimizing the LHS and RHS yields the same solution. 
\end{proof}
For partial orders, only the constraint set changes. 

\section{A note on relative feedback}
\label{app:relative}

Dueling bandits~\citep{yue2009interactively, sui2018advancements, bengs2021preference} are the simplest-to-analyze as rewards are observed in the same format as the latent state--as relative preference feedback. Let $\mathcal{D}_T = ((a_t, a'_t, \tilde{r}_t))_{t=1}^T$ be a sequence of preference feedback events where $a_t, a'_t \in [m]$ are two competing actions and $\tilde{r}_t \in \{0,1\}$ indicates which action was preferred. Assuming that there are latent, noisy and continuous rewards $r_t, r'_t$ for the two actions, let $\tilde{R}_t = \mathds{1}[R_t \geq R'_t]$ indicate noisy preferences for $a_t$ over $a'_t$ and $\tilde{r}_t$ its realization. 

If for any latent state $s \in [m]$, there exists an (unknown) margin parameter $\rho > 0$ such that most of the time, with a margin $\rho$, the reward for $a$ is higher than the reward for $a'$, if $a$ is preferred to $a'$ 
then, 
$$
p(\tilde{R}_t = 1 \mid a_t, a'_t, s) \leq \left\{
\begin{array}{ll}
\frac 1 2 -\rho, & a_t \preceq_s a_t' \\
1, & \mbox{otherwise}
\end{array}\right.
$$
provides a crude upper bound on the likelihood of a single reward from $\mathcal{D}_T$ under $s$. In other words, the probability of observing $r_t > r'_t$ is less than $1/2-\rho$ if the action $a'_t$ has lower rank than $a_t$. Defining $n_i(s)$ to be the number of observed inversions of the rank imposed by $s$
$
n_i(s) = \sum_{t=1}^T (\mathds{1}[a_t \succ_s a_t'] \neq \mathds{1}[r_t \leq r'_t])~
$ 
then, we can upper bound the full likelihood under $s$ as 
\begin{equation}\label{eq:rel_lik_bound}
p(\mathcal{D}\mid s) \leq (\frac 1 2 -\rho)^{n_i(s)}\cdot 1^{T-n_i(s)} \leq 2^{-n_i(s)}
\end{equation}
and, likewise, the posterior $p(s \mid \mathcal{D}) \propto p(\mathcal{D}\mid s)p(s)$.
Bounding the posterior as in \eqref{eq:rel_lik_bound} allows us to rule out candidate latent states as the probability decays with the number of inversions, and a similar posterior sampling algorithm like \lobTS{} for preference feedback can be obtained.

\subsection{Absolute to relative feedback}
\label{app:absolute_feedback}
Absolute reward feedback can always be turned into preference feedback. For example, plays and rewards $(a_1, r_1), (a_2, r_2), (a_3, r_3), (a_4, r_4)$ can be paired up consecutively: $(a_1, a_2, \mathds{1}[r_1>r_2]), (a_3, a_4, \mathds{1}[r3>r4])$. This ensures that different pairs comprise independent events, unlike, say, an all-pairs comparison. However, this procedure is likely very inefficient, statistically. Rewards obtained for each action provide much more information than is used by considering the pairs of most recent actions. Moreover, the algorithm does not rule out playing the same action twice, which means it is prone to getting stuck in local optima. 


\section{Rarity of similar total orders in LOB}
\label{app:similarity_orders}
\begin{observation} \textbf{Rarity of Similar Total Orders}.
Consider $k$ actions in the LOB framework, with $O_{s_1}$ and $O_{s_2}$ as two distinct preference orders drawn randomly from the set of all permutations of $k$ actions. The probability that $O_{s_1}$ and $O_{s_2}$ differ in exactly two positions is:
$$
P(|\{i \mid O_{s_1}(i) \neq O_{s_2}(i)\}| = 2, i\in[k]) = \frac{\binom{k}{2}}{k! - 1}
$$
\end{observation}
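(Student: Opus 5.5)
The plan is to condition on the first order and count. Represent each total order as a bijection $O:[k]\to\cA$, with $O(i)$ the arm in position $i$. Read ``drawn randomly'' as two draws without replacement, as in the synthetic setup of Section~\ref{sec:experiments}: the pair $(O_{s_1},O_{s_2})$ is uniform over ordered pairs of distinct permutations. Then, conditioned on any fixed value of $O_{s_1}$, the order $O_{s_2}$ is uniform over the remaining $k!-1$ permutations. So it suffices to show that, for every fixed $O_{s_1}$, exactly $\binom{k}{2}$ permutations $O_{s_2}$ differ from $O_{s_1}$ in exactly two positions. Averaging over $O_{s_1}$ then gives the stated probability $\binom{k}{2}/(k!-1)$.

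The key step is to characterize these $O_{s_2}$. I will show they are exactly the orders $O_{s_1}\circ\tau$, where $\tau$ ranges over the transpositions of $[k]$.

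First, suppose $O_{s_2}$ agrees with $O_{s_1}$ everywhere except at positions $i\neq j$. Both maps are bijections onto $\cA$ and agree on $[k]\setminus\{i,j\}$. Hence the two arms left over are the same set $\{O_{s_1}(i),O_{s_1}(j)\}$ in both orders. Since $O_{s_2}(i)\neq O_{s_1}(i)$, we must have $O_{s_2}(i)=O_{s_1}(j)$ and $O_{s_2}(j)=O_{s_1}(i)$. That is, $O_{s_2}=O_{s_1}\circ(i\,j)$.

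Conversely, each transposition $(i\,j)$ yields an order $O_{s_1}\circ(i\,j)$ that differs from $O_{s_1}$ in exactly the positions $i$ and $j$, because $O_{s_1}$ is injective. Distinct transpositions give distinct orders, since right composition with $O_{s_1}$ is injective. So the count is the number of unordered pairs $\{i,j\}$, namely $\binom{k}{2}$.

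The main obstacle is interpretive rather than technical. The formula holds only under the ``distinct, uniformly drawn'' model, which is why the denominator is $k!-1$ and not $k!$. I will state this conditioning explicitly, and note that with independent draws (replacement allowed) the answer would instead be $\binom{k}{2}/k!$. The remaining point to handle carefully is the pigeonhole argument that forces a swap, which rules out, for example, 3-cycles. A 3-cycle changes three positions and so is excluded automatically by the exact-two condition.
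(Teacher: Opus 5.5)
Your proof is correct and takes essentially the same route as the paper: both show that the second order must be $O_{s_1}\circ\tau$ for a transposition $\tau$, which gives $\binom{k}{2}$ favorable choices out of $k!-1$. The paper argues via the non-fixed points of the relative permutation $O_{s_2}^{-1}\circ O_{s_1}$ and counts ordered pairs ($\binom{k}{2}\,k!$ favorable out of $k!(k!-1)$), whereas you condition on $O_{s_1}$ and use a direct pigeonhole argument; the two are equivalent.
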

\begin{proof}
Consider two distinct random total orders  $O_{s_1}$ and $O_{s_2}$ of $k$ actions, here represented as permutations, rather sets of than pairwise relations. We need to find the probability that they differ in exactly two positions, i.e., $|\{i \mid O_{s_1}(i) \neq O_{s_2}(i)\}| = 2, i\in[k]$. Define the relative permutation $\tau = O_{s_2}^{-1} \circ O_{s_1}$. The differing positions are the non-fixed points of $\tau$ (where $\tau(i) \neq i$), so $\tau$ must be a \textit{transposition}, swapping two elements. The number of possible transpositions is $\binom{k}{2}$. The total number of ordered pairs $(O_{s_1}, O_{s_2})$ with $O_{s_1} \neq O_{s_2}$ is $k! \cdot (k! - 1)$. For each transposition $\tau$, there are $k!$ pairs where $O_{s_2} = O_{s_1} \circ \tau$, since $O_{s_1}$ can be any permutation. Thus, the number of favorable pairs is $\binom{k}{2} \cdot k!$.

Therefore, the probability is:
$$
\frac{\binom{k}{2} \cdot k!}{k! \cdot (k! - 1)} = \frac{\binom{k}{2}}{k! - 1}
$$

\end{proof}

\section{Identifiability of the latent variable model}
Under mild assumptions, the set of possible preference orders are \emph{identifiable} from historical sequences of actions and rewards. All we need is access to $\mu_{i,a} \coloneqq \E[R_{i,a}]$ for any instance $i$ and action $a$, and we can extract the preference orders for instance $i$. By assumption, these will make up a set of $m$ unique orders, which is the set we are looking for. 

With assumptions on the probability of observing each action for each instance, the set of orders is also \emph{estimable}. Consider a data set of data from $N$ instances $i=1, ..., N$, each represented by a random observation sequence $(a_{i,1}, r_{i,1}), ..., (a_{i,T}, r_{i,T})$ and an unobserved latent state $s_i$. Each subject has reward means $\bmu_i = (\mu_{i,1}, ..., \mu_{i,k})$ consistent with the latent state, $\bmu_i \in \cH_{s_i}$. Further, assume that for all $i,a$, $|\mu_{i,a} - \mu_{i,a'}| \geq \Delta$ for some $\Delta > 0$. Finally, assume for expositional purposes that $R_{i,t} \sim \cN(\mu_{i,A_t}, \sigma^2)$ for a fixed $\sigma>0$. Let $n_{i,a} = |\{t \leq T : a_{i,t} = a\}|$ and define $n_{\min} = \min_{i,a} n_{i,a}$. The same result holds for general-Gaussian random variables.  We can use the standard sub-Gaussian tail bound to guarantee that 
$$
\mathrm{Pr}(|\hat{\mu}_{i,a} - \mu_{i,a}| > \Delta) \leq 2\exp\left(-\frac{n_{\min}\Delta^2}{2\sigma^2}\right)~.
$$
By the union bound, 
$$
\mathrm{Pr}(\max_{a \in [k]} |\hat{\mu}_{i,a} - \mu_{i,a}| > \Delta) \leq 2k\exp\left(-\frac{n_{\min}\Delta^2}{2\sigma^2}\right)~.
$$
This is an upper bound on the probability that the order of a single instance is misestimated \emph{at all}. For all of the orders to be exact, we need to multiply the RHS by $N$. Thus, with probability at least $1-\delta$, if 
$$
n_{\min} \geq 2\frac{\sigma^2}{\Delta^2}\log\left(\frac{2kN}{\delta} \right)~, 
$$
all instance orders are correct. If all instance orders are correct, there exists exactly $m$ unique orders with high probability, provided that $\min_s p(S=s)$ and $N$ are sufficiently large. These are the latent orders we are looking for. If $n_{\min}$ is a random variable, such as if the historical data was also collected by a bandit algorithm we need to take that into account as well.

\section{Broader impact }
Although the methods in this work are primarily methodological and do not have direct practical implications, they are designed with real-world sequential decision-making in mind. However, any application of our method in general personalization must be made with caution and sufficient guard rails appropriate for the specific problem. 


\end{document}